\title{Simple, Fast, and Flexible Framework for Matrix Completion with Infinite Width Neural Networks}
\author{Adityanarayanan Radhakrishnan\thanks{Laboratory for Information \& Decision Systems, and 
 Institute for Data, Systems, and Society, 
 Massachusetts Institute of Technology} $~~$ George Stefanakis\textsuperscript{\specificthanks{1}} $~~$ Mikhail Belkin \thanks{
Halıcıoğlu Data Science Institute, University of California, San Diego} $~~$ Caroline Uhler\textsuperscript{\specificthanks{1},}\thanks{Broad Institute of MIT and Harvard} }
\date{\today}
\begin{document}

\maketitle

\begin{abstract}
 Matrix completion problems arise in many applications including recommendation systems, computer vision, and genomics. Increasingly larger neural networks have been successful in many of these applications, but at considerable computational costs.  Remarkably, taking the width of a neural network to infinity allows for improved computational performance.  In this work, we develop an infinite width neural network framework for matrix completion that is simple, fast, and flexible.   Simplicity and speed come from the connection between the infinite width limit of neural networks and kernels known as neural tangent kernels (NTK).   In particular, we derive the NTK for fully connected and convolutional neural networks for matrix completion.  The flexibility stems from a \emph{\prior}, which allows encoding relationships between coordinates of the target matrix, akin to semi-supervised learning.  The effectiveness of our framework is demonstrated through competitive results for virtual drug screening and image inpainting/reconstruction.  We also provide an implementation in Python to make our framework accessible on standard hardware to a broad audience.
\end{abstract}

\section{Introduction}
Matrix completion is a fundamental problem in machine learning, arising in a variety of applications from collaborative filtering to virtual drug screening, and  image inpainting/reconstruction.  Given a matrix $Y$ with only a subset of coordinates observed, the goal of matrix completion is to impute the unobserved entries in $Y$.  For example, in collaborative filtering (Fig.~\ref{fig: Overview Schematic}a), matrix completion is used to infer the interests of a user from the interests of other users.  A prominent example is the Netflix challenge of inferring movie preferences from sparsely-populated matrices of user ratings~\cite{NetflixPrize}.
For virtual drug screening (Fig.~\ref{fig: Overview Schematic}b), matrix completion is used to predict the effect of a drug on a cell type/state given other drug and cell type/state combinations.  For image inpainting (Fig.~\ref{fig: Overview Schematic}c) and image reconstruction (Fig.~\ref{fig: Overview Schematic}d), matrix completion is used to restore missing pixels in a corrupted image.

Standard approaches to matrix completion such as nuclear norm minimization \cite{MinNuclearNormMinRank, MatrixCompletionNuclearNorm, MatrixCompletionNuclearNormTao} or deep matrix factorization \cite{DeepMatrixFactorization} aim for a completion that yields a low rank matrix.  While such methods can be effective in applications like collaborative filtering,  where low rank can capture user similarity, such an objective function can lead to ineffective solutions for applications including drug response imputation, image inpainting, or image reconstruction.  For example, in the case of drug response imputation, imputing a new drug would involve predicting the values of an entirely-missing vector of gene responses (in contrast to the aforementioned Netflix problem, which involves imputing single scalar entries of the matrix). 
In this case, a low-rank reconstruction would replace all missing entries with a fixed constant, thereby leading to poor predictive performance.  Similarly, for image inpainting and reconstruction, a low rank completion is generally ineffective since it does not take into account local image structure \cite{RegularizedDMFImages, DepthImageInpainting}.  Thus, there is a need for a more general approach to matrix completion that can easily adapt to the structures in different applications.  

In this work, we provide a simple, fast, and flexible framework for matrix completion. To accomplish this, 
we view matrix completion as an inverse problem; given a matrix $Y \in \mathbb{R}^{m \times n}$ such that a subset of coordinates $S = \{(i, j)\} \subset [m] \times [n]$ are observed and the other entries are missing, we aim to construct $\hat{Y} \in \mathbb{R}^{m \times n}$ such that $\hat{Y}_{i,j} \approx Y_{i,j}$ for all observed coordinates $(i,j)\in S$.  
We use neural networks to model the observations in $Y$ and use gradient descent to minimize:
\begin{equation}
\label{eq: Deep Matrix Factorization Framework}
      \mathcal{L}(\mathbf{W})\!=\!\!\!\sum_{(i,j) \in S} \!\!\!\!\left(Y_{i,j} \!-\! [W_d \phi(W_{d-1}  \phi( \ldots W_2  \phi(W_1  Z) \ldots ))]_{i,j}  \right)^2\!\!\!,
\end{equation}
where $\mathbf{W}=\{W_{\ell}\}_{\ell=1}^{d}$ are the weights of a neural network with each $W_\ell \in \mathbb{R}^{k_{\ell+1} \times k_\ell}$ and $k_{d+1} =  m$, $k_1 = p$; $\phi: \mathbb{R} \to \mathbb{R}$ is a fixed element-wise nonlinearity; and $Z \in \mathbb{R}^{p \times n}$ is a fixed application-dependent matrix, which we call the feature prior (described in detail below).  The completed matrix $\hat{Y}$ is then obtained using the forward model with the trained weights, i.e.,  $\hat{Y}= W_d \phi(W_{d-1}  ( \ldots W_2  \phi(W_1  Z) \ldots ))$. The main contribution of this work is showing that minimizing the loss in Eq.~[\ref{eq: Deep Matrix Factorization Framework}] when the width $\{k_{\ell}\}_{\ell=2}^{d}$ of the neural network tends to infinity, gives rise to a simple, fast, and flexible framework for matrix completion suitable for a range of applications.

Superficially, the formulation in Eq.~[\ref{eq: Deep Matrix Factorization Framework}] appears similar to that of traditional supervised learning, where a neural network is trained to map data (which would correspond to $Z$ in our formulation) to corresponding labels $Y$. However, it is important to note that in our formulation $Z$ can be independent of the observations $Y$ ($Z$ could for example be the identity matrix or a random matrix). Thus, $Z$ should be interpreted as a prior that can be chosen in an application-dependent manner. We will discuss the effect of this prior as well as how to choose it for very different applications like virtual drug screening and image inpainting.  


\begin{figure*}[!t]
    \centering
    \includegraphics[width=1\textwidth]{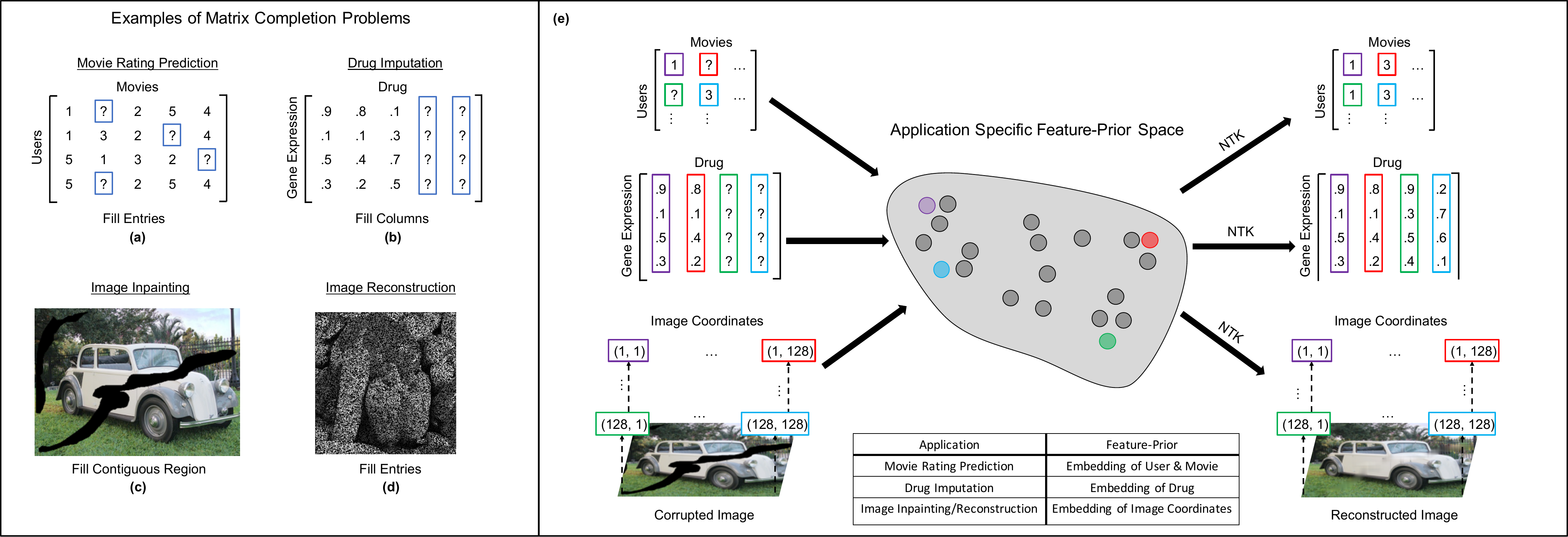}
    \caption{An overview of matrix completion applications where ?'s in (a), (b) and zero (black) pixels in (c), (d) represent unobserved entries.  (a) Collaborative filtering example (the Netflix problem), where the goal is to predict how a user would rate (on a scale of 1-5) an unseen movie.  (b) Virtual drug screening, where the problem is to predict the gene expression profile for an unobserved drug / cell type combination. In this application entire columns are unobserved. 
    (c,d) Image inpainting and reconstruction involves reconstructing a corrupted region of an image (shown as black pixels). (e) Our NTK matrix completion framework is easily adapted to solve all of the above problems by selecting a \prior that represents an embedding of application specific metadata.}
    \label{fig: Overview Schematic}
\end{figure*}

\subsubsection*{Simple and Fast Algorithm for Matrix Completion through Infinite Width Networks}

A trend for improving neural network performance is to make models larger (in multiple respects) \cite{KaplanScalingLaws, ResNet, UNet, WideResNets}. Underscoring this trend, several recent works have empirically demonstrated the advantage of larger (in particular, wider) networks with respect to generalization and performance for classification and representation learning tasks~\cite{RethinkingGeneralization, belkin2019reconciling, DeepDoubleDescent, Adit_AEs}.  There is also an emerging theoretical understanding of the benefit of larger models \cite{TwoModelsDoubleDescent, SurprisesHighDimensional, BenignOverfitting}. The extreme case where network width approaches infinity, is what we consider in this paper in the setting of matrix completion.

While generally larger neural networks require more computational resources for training, quite unintuitively, the limit as network width approaches infinity may yield computational savings.  Namely, it was recently shown that training infinite width networks is equivalent to solving kernel regression with a particular kernel known as the neural tangent kernel (NTK)~\cite{NTKJacot}. For fully connected networks, the NTK can be computed efficiently in closed form \cite{NTKJacot}, and thus training an infinite width network reduces to solving a linear system.  While this may still be computationally expensive when the number of examples is large, we will use recent pre-conditioner methods~\cite{EigenPro, EigenProGPU, Falkon} to overcome this limitation. 

For convolutional networks no efficient computation of the NTK (the so-called CNTK) has been known~\cite{CNTKArora, BayesianDeepImagePrior, DenoisingNTK}. A major contribution of this work is to provide a memory and runtime efficient algorithm for computing the exact CNTK for matrix completion for a class of practical neural network architectures. As a consequence, our framework can be used to inpaint or reconstruct high-resolution images with hundreds of thousands of pixels.  We also provide software for constructing the CNTK as well as pre-computed kernels. The simplicity and speed of our framework is exhibited by the fact that most of the results in this work require only a CPU and can be run efficiently on a laptop.

\subsubsection*{Flexibility through Feature Prior}

The matrix $Z$ in Eq.~[\ref{eq: Deep Matrix Factorization Framework}] is key to making our framework easily adaptable to different applications.  Unlike traditional supervised learning where the goal is to learn a mapping from data $X$ to labels $Y$, the matrix $Z$ in our framework can be independent of the observations in $Y$.  We refer to $Z$ as a \textbf{\prior} since, as we will see, by minimizing the loss in Eq.~[\ref{eq: Deep Matrix Factorization Framework}], the entries of $Z$ encode structure between the coordinates of $Y$ (see Fig.~\ref{fig: Overview Schematic}e).  

We will demonstrate the flexibility of our framework by using it in two very different applications, namely for drug response imputation and image inpainting/reconstruction.  For drug response imputation, we will select \priors that encode information about cell and drug type combinations.  For image inpainting and reconstruction, we will select \priors 
that encode information about image coordinates.  In addition to being flexible, we will show that our approach is competitive in terms of speed and accuracy with prior approaches that were specifically developed for drug response imputation \cite{SontagDrugImputation, FaLRTC} or image \mbox{inpainting/reconstruction~\cite{BiharmonicInpainting,ScikitImage,DeepImagePrior}.}

\section{Matrix Completion with the NTK} 
\label{section: NTK for Matrix Completion}


In this section, we derive the NTK for matrix completion when using fully connected networks. Our derivation provides a principled method for selecting the \prior, $Z$; 
namely, we will show that $Z$ should be an embedding of \textit{coordinate metadata}, i.e. information describing the coordinates of $Y$. For example in drug response imputation, each column of $Z$ could correspond to a different drug and two columns of $Z$ should be similar if the drug metadata is similar (e.g. the molecular structures are similar). The resulting method is then equivalent to performing semi-supervised learning to map from the columns of $Z$ to observed entries in each row of $Y$.  In Section~\ref{sec: Drug Imputation with the NTK}, we will utilize this theoretical result to select an effective \prior for virtual drug screening.  

Since the NTK forms the backbone of our framework, we start with the definition of the NTK~\cite{NTKJacot} and briefly review how solving kernel regression with the NTK connects to training infinitely wide neural networks.  

\begin{definition}[NTK] Let $f(w ; x): \mathbb{R}^{p} \times \mathbb{R}^{d} \to \mathbb{R}$ denote a neural network with parameters $w$.  The corresponding \textbf{neural tangent kernel}, $K:\mathbb{R}^{d} \times \mathbb{R}^{d} \to \mathbb{R}$, is a symmetric, continuous, positive definite function given by: 
\begin{align*}
    K(x, x') = \langle \nabla_{w} f(w^{(0)} ; x), \nabla_{w} f(w^{(0)} ; x') \rangle,
\end{align*}
where $w^{(0)} \in \mathbb{R}^{p}$ are the network parameters at initialization.  
\end{definition}

For a review of kernel regression and kernel functions see \cite{KernelMachineReview}.  Given training data $(x^{(i)}, y^{(i)})\in\mathbb{R}^d\times \mathbb{R}$ for $i=1,\dots , n$,  
solving kernel regression with the NTK involves minimizing the loss: 
\begin{align}
\label{eq: Kernel Regression}
    \mathcal{L}(\alpha) = \| y - \alpha \hat{K} \|_2^2, 
\end{align}
where $\alpha \in \mathbb{R}^{1 \times n}$, $y=[y^{(1)}, \dots , y^{(n)}]^T$, and $\hat{K} \in \mathbb{R}^{n \times n}$ with $\hat{K}_{i,j} = K(x^{(i)}, x^{(j)})$. The work of \cite{NTKJacot} established that using kernel regression with the NTK is equivalent 
(under mild assumptions) to training a neural network to map $x^{(i)}$ to $y^{(i)}$ using the mean squared error, in the limit as the network width tends to infinity.  Throughout this work, we will assume that $w_i^{(0)} \overset{i.i.d}{\sim} \mathcal{N}(0,1)$ and that the nonlinearity $\phi$ in Eq.~[\ref{eq: Deep Matrix Factorization Framework}] is homogeneous (which includes, for example, the rectified linear unit (ReLU), a widely used nonlinearity) so that the NTK corresponding to a fully connected network can be computed efficiently in closed form \cite{CosineKernel, NTKJacot, LeakyReLUNTK}; see Appendix \ref{appendix: A} for a short review of the relevant literature 
and notation.

\subsection*{Feature Prior Provides a Flexible Approach for Matrix Completion through Connection with Semi-supervised Learning}

A natural approach for imputing missing entries in a matrix, $Y$, is to first obtain an embedding of the coordinates of $Y$ (e.g. a map from coordinates $(i,j)$ to $\mathbb{R}^{p}$) and then learn a map from the coordinate embedding to the observed entries in $Y$ (e.g. a map from $\mathbb{R}^{p}$ to $Y_{i,j} \in \mathbb{R}$); see also~\cite[Ch.1]{RecommenderSystemsBook}. For example, for virtual drug screening, one could first embed the drugs based on their molecular properties and then learn a map from this embedding to the measured output, such as gene expression. 
Such an approach in which a map is learned from an embedding to the observed samples is referred to as \textit{semi-supervised learning} \cite[Ch.15]{goodfellow2016deep}.  In this section, we will prove that minimizing the loss in Eq.~[\ref{eq: Deep Matrix Factorization Framework}] is equivalent to using a semi-supervised learning approach for matrix completion.  Namely, we show that the columns of $Z$ represent an embedding of the coordinates of $Y$ and that the NTK is used to map from the columns of $Z$ to the entries in $Y$. 

It is a priori unclear how to compute the NTK for matrix completion, since this requires training examples and labels.  For this, we note the following equivalent formulation of Eq.~[\ref{eq: Deep Matrix Factorization Framework}]:
\begin{align}
\label{eq: Matrix Completion Trace Formulation}
    \mathcal{L}(\mathbf{W}) &= \sum_{(i, j) \in S} (Y_{i, j}  - \langle f_Z(\mathbf{W}), M_{\{(i,j)\}} \rangle )^2, \\
    f_Z(\mathbf{W}) &= W^{(d)} C_d \phi(W^{(d-1)} C_{d-1}\phi( \ldots W^{(2)}C_2\phi(W^{(1)} Z) ) \ldots ), \nonumber
\end{align}
where $C_\ell = c/\sqrt{k_\ell}$ for a constant $c$, $\langle A, B \rangle = tr(A^TB)$ denotes the trace inner product, and  $M_{\{(i,j)\}}$ is an indicator matrix, i.e., it has a $1$ in the $(i,j)$ entry and zeros everywhere else.  To ease notation, we will use $M_{ij}$ to denote the indicator matrix $M_{\{(i,j)\}}$. The formulation in Eq.~[\ref{eq: Matrix Completion Trace Formulation}] shows that we can view matrix completion as a problem where the "training examples" are indicator matrices $M_{ij}$ and the "labels" are the corresponding entries $Y_{i,j}$. This reformulation yields the following closed form 
for the NTK for matrix completion, where 
$\check{\phi}: [-1, 1] \to \mathbb{R}$ denotes the dual activation function \cite{DualActivation} to $\phi$. To keep notation simple, we here provide the theorem when $\phi$ is the ReLU activation function, but this result holds generally for homogeneous nonlinearities; see Appendix \ref{appendix: B}.  

\begin{theorem}
\label{theorem: d layer relu ntk} 
Assume $Z = \{z^{(i)}\}_{i=1}^{n} \in \mathbb{R}^{p \times n}$, where each column is normalized with $\|z^{(i)} \|_2 = 1$.  Let $f_Z(\mathbf{W})$ be a $d$ layer fully connected network with nonlinearity $\phi(x) = \max (x, 0)$ and $c= \sqrt{2}$ in Eq.~[\ref{eq: Matrix Completion Trace Formulation}].  Then, as widths $k_2, k_3, \ldots, k_{d} \to \infty$, the NTK for matrix completion with $f_Z(\mathbf{W})$ is given by
\begin{align*}
    K(M_{ij}, M_{i'j'}) =  \begin{cases}
    \kappa_d({z^{(j)}}^T z^{(j')}) ~~~ \textrm{if } i = i' \\
    0 ~~~~~~~~~~~~~~~~~~ \textrm{if } i \neq i' 
    \end{cases},
\end{align*}
where $\kappa_d(\xi) = \check{\phi}^{(d)}(\xi) + \kappa_{d-1}(\xi) \frac{d\check{\phi}}{d\xi} (\check{\phi}^{(d-1)}(\xi))$, and $\check{\phi}^{(h)}(\xi) = \check{\phi}( \check{\phi}^{(h-1)}(\xi))$ for $h \geq 1$ and $\check{\phi}^{(0)}(\xi) = \xi$.
\end{theorem}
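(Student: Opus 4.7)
The plan is to reduce the matrix-completion NTK to the standard scalar NTK of a fully connected network by exploiting the column-wise structure of $f_Z(\mathbf{W})$. Since every operation in $f_Z$ (matrix multiplication and element-wise $\phi$) acts column-wise on its right argument, the $j$-th column of $f_Z(\mathbf{W})$ depends only on $z^{(j)}$ and the weights; with $\langle A,B\rangle = \mathrm{tr}(A^T B)$ we therefore have $\langle f_Z(\mathbf{W}), M_{ij}\rangle = [f_Z(\mathbf{W})]_{ij} = [F(\mathbf{W}; z^{(j)})]_i$, where $F(\mathbf{W};\cdot)$ is the $d$-layer, $m$-output fully connected ReLU network obtained by feeding $f_Z$ a single column. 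Thus $K(M_{ij}, M_{i'j'})$ is precisely the $(i,i')$-entry of the matrix-valued NTK of $F$, evaluated at the unit-norm inputs $z^{(j)}, z^{(j')}$.

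Next I would split $K = \sum_{\ell=1}^d K^{(\ell)}$, where $K^{(\ell)}$ is the inner product of the two gradients with respect to $W^{(\ell)}$. At layer $d$ a direct calculation gives that $\partial_{W^{(d)}}[F(\mathbf{W};z^{(j)})]_i$ is supported on row $i$ of $W^{(d)}$ and equals $C_d\, h^{(d-1)}(z^{(j)})$ there, so $K^{(d)}$ carries an explicit $\delta_{i,i'}$ and evaluates to $\delta_{i,i'}\, C_d^2\, \langle h^{(d-1)}(z^{(j)}), h^{(d-1)}(z^{(j')})\rangle$, which concentrates to $\delta_{i,i'}\,\check{\phi}^{(d)}(\xi)$ with $\xi = {z^{(j)}}^T z^{(j')}$ as the inner widths tend to infinity. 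For $\ell < d$, the backpropagation formula writes the gradient as an outer product of a backward signal with the forward activation $h^{(\ell-1)}$; the two backward signals pick up the factor $C_d^2\, \langle [W^{(d)}]_i, [W^{(d)}]_{i'}\rangle = \frac{2}{k_d}\langle [W^{(d)}]_i, [W^{(d)}]_{i'}\rangle$ from the top layer. Because the rows of $W^{(d)}$ are independent with i.i.d.\ Gaussian entries, this normalized inner product concentrates to $2\,\delta_{i,i'}$ as $k_d\to\infty$, so every $K^{(\ell)}$ inherits the $\delta_{i,i'}$ factor and the off-diagonal vanishing in the theorem is established.

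Once $i = i'$, the problem is exactly the standard scalar NTK of a $d$-layer fully connected ReLU network in the NTK parametrization with unit-norm inputs. I would then invoke the classical layer-wise recursion (Appendix~A and \cite{NTKJacot}): with $\Sigma^{(d)}(\xi) = \check{\phi}^{(d)}(\xi)$ the covariance kernel of the $d$-th pre-activations (which for unit-norm inputs and $c = \sqrt{2}$ is exactly the $d$-fold iterate of the dual activation) and the homogeneity identity $\mathbb{E}[\phi'(U)\phi'(V)] = \frac{d\check{\phi}}{d\xi}(\xi)$ for $(U,V)\sim\mathcal{N}(0,\Sigma_\xi)$, summing the layer-wise contributions gives
\begin{align*}
\kappa_d(\xi) \;=\; \check{\phi}^{(d)}(\xi) \;+\; \kappa_{d-1}(\xi)\,\frac{d\check{\phi}}{d\xi}\big(\check{\phi}^{(d-1)}(\xi)\big),
\end{align*}
which is the claimed formula with base case $\kappa_0(\xi) = \xi$.

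The main obstacle is the rigorous infinite-width limit in the off-diagonal argument: the factor $\frac{1}{k_d}\langle [W^{(d)}]_i, [W^{(d)}]_{i'}\rangle$ is nested inside a random product of independent Gaussian backward matrices, and one must justify passing the $k_d\to\infty$ limit inside while preserving the independence needed for the standard NTK concentration at the earlier layers. This can be handled by taking the widths sequentially $k_2\to\infty$, then $k_3\to\infty$, and so on up to $k_d$, so that at each step the relevant law-of-large-numbers concentration applies cleanly, matching the layer-by-layer limit used in the original NTK derivation.
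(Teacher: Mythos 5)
Your proposal is correct and follows essentially the same route as the paper's proof in Appendix~\ref{appendix: B}: both obtain the $\delta_{i,i'}$ factor from the fact that the last-layer gradient at $M_{ij}$ is supported on row $i$ of $W^{(d)}$, both kill the off-diagonal contributions of earlier layers via the law of large numbers applied to $\tfrac{1}{k_d}\langle [W^{(d)}]_i,[W^{(d)}]_{i'}\rangle$, and both reduce the diagonal case to the standard dual-activation recursion with sequential (layer-by-layer) width limits. The only difference is presentational — you sum layer contributions and invoke the known scalar NTK recursion, while the paper packages the same computation as an induction on depth.
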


The proof as well as an example showing 
how Theorem~\ref{theorem: d layer relu ntk} can be used in practice to compute the NTK for matrix completion is presented in Appendix \ref{appendix: B}.  Since the kernel value between $M_{ij}$ and $M_{i'j'}$ is a function of columns $j$ and $j'$ of $Z$, Theorem~\ref{theorem: d layer relu ntk} implies that the NTK for matrix completion maps columns of $Z$ to entries $Y_{i,j}$, and thus the columns of $Z$ encode structure between the coordinates of $Y$.  

By varying the nonlinearity $\phi$, depth $d$, and \prior~$Z$, our framework encapsulates a variety of semi-supervised learning approaches. To provide a non-trivial example, we prove in Appendix \ref{appendix: B} that our framework for matrix completion generalizes Laplacian-based semi-supervised learning \cite{SemiSupervisedLearningManifolds}.  This insight regarding the connection between our framework for matrix completion and semi-supervised learning represents the backbone for a simple and competitive approach to virtual drug screening described in the next section.  

\begin{figure*}[!t]
    \centering
    \includegraphics[width=\textwidth]{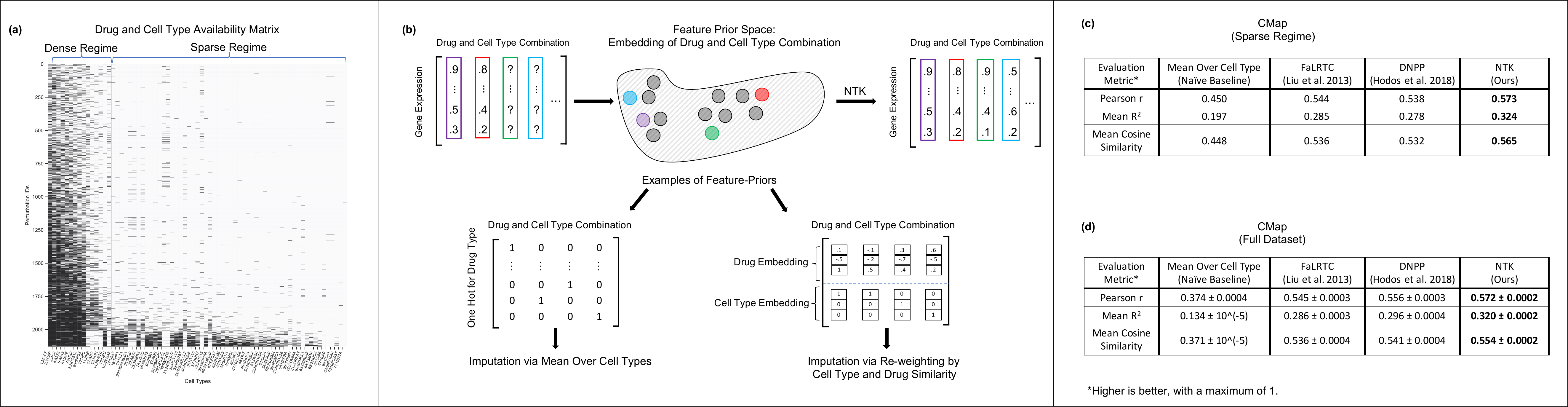} 
    \caption{Our infinite width neural network framework outperforms DNPP \cite{SontagDrugImputation}, FaLRTC \cite{FaLRTC}, and mean over cell types for drug response imputation on CMap.  (a) We visualize the availability of cell type and drug combinations of the subset from \cite{SontagDrugImputation}.  (b) Our method corresponds to first providing an embedding of cell type and drug combinations as the \prior and then applying the NTK.  We show that: (1) using a \prior consisting of one-hot vectors for drugs corresponds to imputation by performing mean across observations for each cell type and (2) using a \prior that captures similarity between drugs and cell types is effective for imputation. (c, d) Our infinite width neural network framework (denoted NTK) outperforms DNPP and mean over cell type across three evaluation metrics.  We use 5 rounds of 10-fold cross validation to determine that the difference between our method and the next best method, DNPP, is statistically significant (p-value less than $10^{-20}$).}
    \label{fig: NTK Drug Imputation}
\end{figure*}

\section{Virtual Drug Screening with the NTK}
\label{sec: Drug Imputation with the NTK}


CMAP is a prominent, large-scale, publicly available drug screen that considers 20,413 different compounds and 72 different cell lines \cite{CMap}. Experiments in CMAP were performed on a subset of 201,484 drug/cell line pairs; for each of these pairs the gene expression profile of $978$ landmark genes was measured.
CMAP has been an important resource for computational approaches to drug discovery and drug repurposing \cite{CMap,COVIDAutoencoding, DrugRepurposingReview}.  In these applications, the goal is to use a subset of observed drug/cell type pairs to predict the gene expression profile of new drug/cell type pairs.  These profiles are then used to identify drug candidates of interest that can be tested experimentally \cite{DrugRepurposingLungCancer, DrugRepurposingAlzheimers}. 

The CMAP dataset can be viewed as a 3-dimensional tensor (drugs, cell lines, genes), where many of the entries are missing. In the following, we will use the same pre-processing of the data as in \cite{SontagDrugImputation} to filter out drug/cell line combinations with very few or inconsistent samples; a description and a link to the dataset is provided in SI Appendix C.   The resulting drug/cell line combinations are shown in Fig.~\ref{fig: NTK Drug Imputation}a. The 3-dimensional tensor can be flattened into a matrix, where the columns correspond to drug/cell line combinations and the rows represent genes (see Fig.~\ref{fig: NTK Drug Imputation}b)\textcolor{black}{; i.e., following the notation from Section 2, entry $Y_{ij}$ of the resulting flattened matrix is a real-valued number quantifying the gene expression of gene $i$ in drug and cell type combination $j$.} 
This matrix has a missing column for every missing drug/cell line combination. Classical low rank matrix factorization methods would prove ineffective in this setting since they would replace each missing column by the same constant column. 
On the other hand, Theorem \ref{theorem: d layer relu ntk} suggests the NTK as an effective way for imputing the missing gene expression profiles by selecting the feature prior $Z$ such that two columns of $Z$ are similar if they correspond to similar drug/cell line pairs. In the following, we discuss three different feature priors for this application; for a full description of these priors see SI Appendix D.

\subsection*{Feature Prior corresponding to the Mean Over Cell Type Baseline}
A simple baseline is to impute the gene expression profiles for each missing drug for a given cell line by the mean over all observed drugs for this cell line. 
Quite surprisingly, this simple approach gives rise to a strong baseline~\cite{SontagDrugImputation, SyntheticInterventionsCMap}, since cell type is the dominant factor, while drugs have subtle effects on gene expression.  


While it is generally nontrivial to improve upon this simple baseline without constructing a specialized algorithm \cite{SontagDrugImputation, DrugScreeningDeepLearningFramework, DrugScreeningTensorDecompTT, CarpenterDrugImputation}, our NTK framework provides an easy way for doing so.  In particular, our framework makes it evident that the \prior corresponding to the mean over cell type baseline is trivial, since it corresponds to an embedding in which drugs are encoded via one-hot vectors 
(see Appendix \ref{appendix: E}).  Thus, to improve upon this baseline, we select any \prior that can capture similarities between drugs. 

\subsection*{\PriorAllCaps Corresponding to Previous Algorithms}

We now demonstrate that our framework provides a direct approach to improve on previous methods for virtual drug screening by using the output of previous methods as a \prior in our framework.  Namely, if a method is used to produce an imputation, $\hat{Y}$, then the columns in $\hat{Y}$ should represent an embedding of drug and cell type combinations that captures their similarity. Hence, we can use  $Z = \hat{Y}$ as the \prior in our method. For illustration, we apply this approach to two state-of-the-art methods for virtual drug screening: (1) Drug Neighbor Profile Prediction (DNPP)~\cite{SontagDrugImputation}, which is a weighted nearest neighbor scheme, and (2) Fast Low Rank Tensor Completion (FaLRTC) \cite{FaLRTC}, which involves low rank matrix completion along each slice of the CMAP tensor. We show that our framework using these feature priors yields an improvement over the individual methods; see Appendix \ref{appendix: F}. 




\subsection*{Proposed \PriorAllCaps for Drug Response Imputation}

Observing the pattern of data availability in Fig.~\ref{fig: NTK Drug Imputation}a, it is apparent that a subset of cell lines have observations for many ($>150$) drugs (dense regime), while many cell lines have observations for only few ($\leq 150$) drugs (sparse regime).  While previous methods such as DNPP are quite effective in the dense regime, they are not as effective in the sparse regime; see Fig.~\ref{fig: NTK Drug Imputation}c and Appendix \ref{appendix: G}.  
This can be explained by the fact that in the sparse regime DNPP roughly imputes using the simple mean over cell type baseline.

For effective drug response imputation in the sparse regime, our framework can be used to construct a simple \prior by concatenating embeddings for cell types and drugs.  In particular, we can use the gene expression values for a reference cell type for which there are a lot of drug observations (e.g.~MCF7 in CMAP) as the embedding of drugs and the mean gene expression across all observations for a given cell type as the embedding of cell type.  Fig.~\ref{fig: NTK Drug Imputation}c shows that the NTK with this simple \prior outperforms mean over cell type, FaLRTC and DNPP 
in the sparse regime.  We compare across Pearson r value, mean $R^2$, and mean cosine similarity.  A description of all evaluation metrics is provided in Appendix \ref{appendix: H}.  By combining our \prior for the sparse regime with the FaLRTC based \prior for the dense regime, we obtain a drug imputation method that significantly outperforms DNPP, FaLRTC, and mean over cell type on the full dataset; see Fig.~\ref{fig: NTK Drug Imputation}d (p-value less than $10^{-20}$ based on 5 rounds of 10-fold cross validation, with an improvement on every fold of every round across all metrics; see Appendix~I).

\section{Matrix Completion with the Convolutional NTK}
\label{sec: Image Inpainting CNTK}

While we have thus far derived and applied the NTK for matrix completion using fully connected networks, these architectures are not nearly as effective as convolutional networks for matrix completion tasks in which the target matrix is an image.  Similar to the case of fully connected networks, a closed form for the NTK corresponding to convolutional networks (the so-called CNTK) is known in the regression setting \cite{CNTKArora}, but it has not been considered in the setting of matrix completion.  Moreover, 
the runtime for computing the CNTK for regression scales quadratically with each image dimension. 
In this section, we derive the CNTK for matrix completion and provide a computationally efficient method for computing the CNTK for matrix completion for a class of \priors that are effective for image inpainting and reconstruction.


We begin by deriving the CNTK for matrix completion for a simple class of convolutional networks, when there are no downsampling or upsampling layers. We show that in this setting, the CNTK for matrix completion can be computed using terms from the CNTK for classification.  In the following proposition (proof in Appendix \ref{appendix: J}), $\Theta^{(d)} \in \mathbb{R}^{m \times n \times m \times n}$ denotes the tensor corresponding to the CNTK of a $d$ layer convolutional network in the classification setting \cite[Sec.~4]{CNTKArora}.
\begin{prop}
Let $f_Z(\mathbf{W})$ be a $d$ layer convolutional network used to map from \prior, $Z \in \mathbb{R}^{c \times m \times n}$, to the target matrix, $Y \in \mathbb{R}^{m \times n}$. Then as the number of convolutional filters per layer approaches infinity, the CNTK of $f_Z(\mathbf{W})$ is given by:
\begin{align}
\label{eq: CNTK for image completion}
    K(M_{ij}, M_{i'j'}) = [\Theta^{(d)}(Z, Z)]_{i,j,i',j'},
\end{align}
where $M_{ij}, M_{i'j'} \in \mathbb{R}^{m \times n}$ denote indicator matrices. 
\end{prop}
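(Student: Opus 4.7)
The plan is to leverage the trace reformulation of the matrix completion loss in Eq.~[\ref{eq: Matrix Completion Trace Formulation}]: when we view the indicator matrix $M_{ij}$ as the ``training example'' associated with the observation $Y_{i,j}$, the scalar network output at this example is simply
\[
\langle f_Z(\mathbf{W}), M_{ij} \rangle \;=\; [f_Z(\mathbf{W})]_{i,j}.
\]
Applying the definition of the NTK to this scalar directly gives
\[
K(M_{ij}, M_{i'j'}) \;=\; \Big\langle \nabla_{\mathbf{W}}\,[f_Z(\mathbf{W}^{(0)})]_{i,j},\; \nabla_{\mathbf{W}}\,[f_Z(\mathbf{W}^{(0)})]_{i',j'} \Big\rangle.
\]
So the problem reduces to identifying the infinite-width limit of a spatial cross-gradient inner product evaluated on the single fixed ``image'' $Z$.

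Next, I would match this quantity to the classification CNTK tensor of \cite{CNTKArora}. In that setting, for a convolutional network without downsampling and prior to any final pooling or classification head, the tensor $\Theta^{(d)}(X, X') \in \mathbb{R}^{m\times n\times m\times n}$ is defined so that its $(i,j,i',j')$ entry is exactly the infinite-width limit of $\langle \nabla_{\mathbf{W}}[f(X)]_{i,j},\, \nabla_{\mathbf{W}}[f(X')]_{i',j'}\rangle$ at random initialization. Specializing to $X = X' = Z$ and reading off the $(i,j,i',j')$ entry then yields the claimed identity $K(M_{ij}, M_{i'j'}) = [\Theta^{(d)}(Z, Z)]_{i,j,i',j'}$. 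Note that no ``second input'' appears because $Z$ is fixed across all examples; the example index lives entirely in the pair of spatial coordinates.

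The main obstacle will be making this correspondence rigorous at the layerwise level, i.e.\ verifying that the two recursions driving the CNTK carry over unchanged to the no-pooling, no-classification-head architecture used here. Concretely, I would (i) write out the per-layer Gaussian covariance $\Sigma^{(\ell)}$ for pre-activations at two spatial sites on the common input $Z$, using the standard dual-activation step via $\check{\phi}$; (ii) propagate the gradient (``backward'') covariance through the same convolutional and nonlinear layers, which in the absence of pooling amounts to convolving with the transposed filter covariance and multiplying by the derivative dual activation; and (iii) combine the forward and backward pieces via the CNTK recursion of \cite{CNTKArora} to recover $\Theta^{(d)}(Z,Z)$. The infinite-width step — replacing empirical inner products of gradients over filter channels by their Gaussian limits — has to be applied uniformly in the finite index set $(i,j,i',j')$, but since that set has bounded size $m n m n$ independent of width, this is routine. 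Once the bookkeeping matches, the conclusion is immediate.
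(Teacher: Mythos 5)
Your proposal is correct and takes essentially the same route as the paper's proof in Appendix J: both reduce the kernel to the inner product of weight-gradients of the spatial output entries $[f_Z(\mathbf{W})]_{i,j}$ and $[f_Z(\mathbf{W})]_{i',j'}$ on the single fixed input $Z$, and then identify this with the $(i,j,i',j')$ entry of the classification CNTK tensor $\Theta^{(d)}(Z,Z)$ of Arora et al. The paper simply cites that derivation for the final identification, whereas you additionally sketch re-verifying the forward/backward covariance recursion, which is fine but not a different argument.
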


\subsection*{CNTK Performs Semi-Supervised Learning using Image Coordinate Features}

In Section \ref{section: NTK for Matrix Completion}, we established a connection between semi-supervised learning and matrix completion using the NTK.  We now establish a similar connection between semi-supervised learning and matrix completion with the CNTK for a class of \priors defined in Theorem \ref{theorem: Coordinate only CNTK} below. This class includes feature priors that are heavily used in image inpainting applications, namely where the channels of $Z$ are drawn i.i.d.~from a stationary distribution~\cite{BayesianDeepImagePrior, DeepImagePrior}. The following theorem (proof in Appendix \ref{appendix: K}), which is analogous to Theorem \ref{theorem: d layer relu ntk} for the NTK, implies that using the CNTK for matrix completion is equivalent to mapping from coordinate features to observed entries in the target matrix $Y$.

\begin{theorem}
\label{theorem: Coordinate only CNTK}
Consider a convolutional network of depth $d$ with homogeneous activation and in which all filters have size $q$ and circular padding.  Let $Z \in \mathbb{R}^{c \times m \times n}$ satisfy:
\begin{align*}
\sum_{\ell=1}^{c} \sum_{ -\alpha \leq a, b \leq \alpha } Z_{\ell, i+a, j+b} Z_{\ell, i' + a, j' + b} = \psi ( | i - i'|, | j - j'|) 
\end{align*}
for some $\psi: \mathbb{R}^2 \to \mathbb{R}$ with maximum at $(0,0)$ and $\alpha = \frac{q-1}{2}$ (odd $q$). Then as the number of convolutional filters per layer goes to infinity, the CNTK simplifies to:
\begin{align*}
    K(M_{ij}, M_{i'j'}) = \tilde{\psi}(|i-i'|, |j-j'|),
\end{align*}
where $\tilde{\psi}: \mathbb{R}^{2} \to \mathbb{R}$ is a function that can be computed from $\psi$ (a recursive formula is provided in Appendix \ref{appendix: K}). 
\end{theorem}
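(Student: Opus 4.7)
The plan is to reduce to the preceding Proposition and then analyze the CNTK recursion $\Theta^{(d)}$ under the translation-invariance hypothesis on $Z$. By that Proposition, the matrix-completion kernel satisfies $K(M_{ij}, M_{i'j'}) = [\Theta^{(d)}(Z,Z)]_{(i,j),(i',j')}$, so it suffices to show that the classification CNTK tensor $\Theta^{(d)}(Z,Z)_{(i,j),(i',j')}$ depends on its four coordinates only through $(|i-i'|,|j-j'|)$, and to record the resulting recursive formula.

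I would unpack the recursive definition of $\Theta^{(d)}$ from \cite{CNTKArora}. Let $\Sigma^{(h)}$ denote the layer-$h$ pre-activation covariance tensor and $\dot{\Sigma}^{(h)}$ its derivative-weighted analogue (obtained by applying the dual activation $\check{\phi}$ and its derivative $\check{\phi}'$ to the appropriately normalized covariance), and let $\Theta^{(h)}$ denote the CNTK tensor itself. Each recursion step consists of three operations: a patch-wise sum over a $q\times q$ receptive field centered at the index pair, a normalization by the diagonal entries $\Sigma^{(h)}_{(i,j),(i,j)}$, and the pointwise application of $\check{\phi}$ (or its derivative). The CNTK recursion then combines $\Theta^{(h-1)}$ with $\dot{\Sigma}^{(h)}$ via another patch-wise sum.

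The core claim to establish by induction on $h$ is that each of the three tensors $\Sigma^{(h)},\dot{\Sigma}^{(h)},\Theta^{(h)}$ depends on $(i,j)$ and $(i',j')$ only through the differences $(i-i',j-j')$. The base case is precisely the hypothesis on $Z$: the layer-$0$ patch product equals $\psi(|i-i'|,|j-j'|)$. For the inductive step, the key observation is that each operation in the recursion preserves translation invariance. Circular padding makes every patch-wise sum translation-equivariant, so summing a function of $((i+a)-(i'+a),(j+b)-(j'+b))=(i-i',j-j')$ over $-\alpha\le a,b\le\alpha$ still depends only on the original differences; the diagonal entries $\Sigma^{(h)}_{(i,j),(i,j)}$ become a position-independent constant under translation invariance, so the normalization is a global scalar and never breaks symmetry; and applying any pointwise nonlinearity obviously respects dependence on the index differences. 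Reducing the dependence from $(i-i',j-j')$ to $(|i-i'|,|j-j'|)$ then follows from the symmetry of the base function $\psi$ combined with the fact that the receptive field $[-\alpha,\alpha]^{2}$ is symmetric about zero, so the axis reflections that take $(i-i',j-j')$ to $(-(i-i'),j-j')$ or $(i-i',-(j-j'))$ are absorbed into the sum.

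The recursive formula for $\tilde{\psi}$ is then obtained by writing the above operations out explicitly: at layer $h$ one gets $\tilde{\psi}^{(h)}(u,v) = c_{\phi}\,\check{\phi}\bigl(\Lambda^{(h)}(u,v)/\Lambda^{(h)}(0,0)\bigr)$ with $\Lambda^{(h)}(u,v)=\sum_{|a|,|b|\le\alpha}\tilde{\psi}^{(h-1)}(u,v)$ (the summand is constant in $a,b$), and the CNTK analogue involves the same receptive-field sum weighted by $\check{\phi}'$ applied to the normalized covariance. The main obstacle is bookkeeping rather than a new idea: the CNTK recursion in \cite{CNTKArora} intertwines normalization, dual activation, and receptive-field summation, and the task is to verify that translation invariance is preserved at every one of these steps through all $d$ layers while extracting a clean closed form for $\tilde{\psi}$.
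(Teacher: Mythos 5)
Your proposal is correct and follows essentially the same route as the paper's proof: an induction on depth showing that the covariance tensors $\Sigma^{(h)}$, $\dot{\Sigma}^{(h)}$ and the kernel itself remain shift invariant at every layer, so that the receptive-field sum collapses to a factor of $q^{2}$, the diagonal normalization becomes the constant $\psi(0,0)$ (whence $\check{\phi}(1)=1$), and the dual-activation recursion yields the stated $\tilde{\psi}$. The only cosmetic difference is that you phrase the induction over the classification tensor $\Theta^{(h)}$ after invoking the Proposition, whereas the paper carries $\Sigma$, $\dot{\Sigma}$, $K$ directly through the matrix-completion formulation; the content is identical.
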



Since the function $\tilde{\psi}$ depends only on the positions of the coordinates, Theorem \ref{theorem: Coordinate only CNTK} shows that the CNTK for matrix completion is equivalent to semi-supervised learning using kernels on features corresponding to coordinates.

\subsection*{Closed Form for the CNTK of Modern Architectures for Matrix Completion} Unlike the convolutional networks considered thus far, state-of-the-art architectures for unsupervised image inpainting such as~\cite{DeepImagePrior, BayesianDeepImagePrior} incorporate a variety of layer structures including strided convolution, nearest neighbor and bilinear upsampling, skip connections, and batch normalization.  
We derive (in Appendix \ref{appendix: L}) the CNTK for matrix completion using convolutional networks with the following layer structures: (1) Downsampling through Strided Convolution ; (2) Nearest Neighbor Upsampling ; and (3) Bilinear Upsampling.\footnote{The impact of linear downsampling and upsampling on the CNTK is briefly described in Appendix F of \cite{DenoisingNTK}, but the explicit forms are not computed nor used in the experiments.} 


\subsection*{Efficient Computation of the CNTK of Modern Architectures for Matrix Completion}

A key insight that we use to speed up the computation of the CNTK is that the kernel in Eq.~[\ref{eq: CNTK for image completion}] depends only on the \prior and not on the values of the observed pixels in an image.  Hence, the CNTK need  only be computed once for all images of a given resolution.  This enables a drastic speedup over recomputing the kernel for every new image, as is currently required in classification.  

However, using such a direct approach to compute the CNTK is still computationally prohibitive for high resolution images.  In particular, computing the CNTK for a network with $d$ convolutional layers to complete an image of size $2^p \times 2^q$, requires $O(p^2q^2d)$ runtime and $O(2^{2p + 2q})$ space.  In order to overcome these limitations, prior work \cite{DenoisingNTK} used the Nyström method \cite{Nystrom} to approximate the kernel.  Instead of relying on such approximations, we here present an algorithm for computing the exact CNTK in a memory and runtime efficient manner for any convolutional neural network with circular padding, strided convolution, and nearest neighbor upsampling layers, when using a \prior with i.i.d.~random entries.  Such networks and \priors are heavily used for image completion tasks \cite{DeepImagePrior}.


Our main insight that enables such an algorithm is that for convolutional networks with strided convolution and nearest neighbor upsampling layers, the CNTK for low resolution images can be expanded to high resolution images for any \prior with i.i.d.~random entries. In particular, if a neural network with $s$ downsampling and upsampling layers is used to inpaint images of resolution $2^{p} \times 2^{q}$, our algorithm requires only an array of size $2^{2s + p + q}$ while storing the full CNTK requires an array of size $2^{2p + 2q}$.  In practice, $s$ is exponentially smaller than $p, q$ and so our method is significantly more memory efficient; see the following specific example. In addition, since our method only requires computing the CNTK for images of size $2^{s+1} \times 2^{s+1}$, the runtime of our method is $O(2^{4s})$ instead of $O(2^{2p + 2q})$,  and thus, our method is significantly faster than a direct computation. 
A detailed description and proof of our expansion algorithm is presented in Appendix \ref{appendix: M}. 

\textit{Example.} Let $f_Z(\mathbf{W})$ represent a convolutional neural network with circular padding, 3 layers of strided convolution with a stride size of $2$ in each direction, and 3 nearest neighbor upsampling layers with a \prior $Z \in \mathbb{R}^{c \times 512 \times 512}$ satisfying:
\begin{align*}
    \sum_{p=1}^{c} Z_{p, i, j} Z_{p, i', j'} = \begin{cases}
    C_1 & i = i' ~,~ j = j' \\
    C_2 & \text{otherwise}
\end{cases},
\end{align*}
where $C_1, C_2>0$ are constants.  Suppose $f_Z(\mathbf{W})$ is used to inpaint images of size $512 \times 512$.  Then, by computing the CNTK for $16 \times 16$ resolution images, $K_{\ell} \in \mathbb{R}^{16^2 \times 16^2}$, we can expand up to the exact CNTK for $512 \times 512$ images.  Computing $K_{\ell}$ takes roughly $11$ seconds when using a CPU with 1 thread and $\tilde{K}$ uses less than $100$MB of memory with floating point precision.  On the other hand, even storing the true kernel $K \in \mathbb{R}^{512^2 \times 512^2}$ would require roughly 256GB memory when using floating point precision.  This is twice the amount of RAM available on our server and 16 times the amount of RAM available on most laptops.      

\begin{figure*}[!t]
    \centering
    \includegraphics[width=\textwidth]{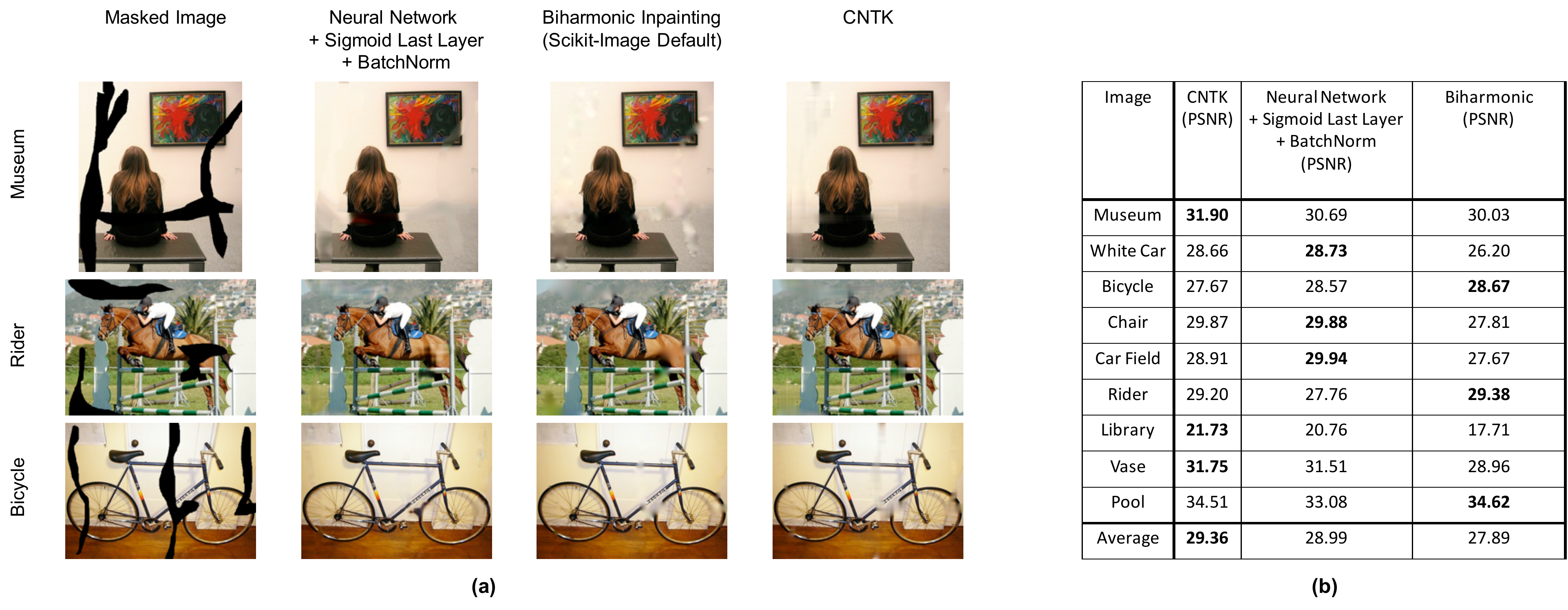} \caption{Large hole inpainting using (i) the CNTK, (ii) neural networks with sigmoid last layer and batch normalization layers that are trained with Adam, and (iii) biharmonic functions. (a) Qualitative comparison of inpainting results across the three methods.  Results for all images are provided in Appendix Fig.~\ref{fig: SI Large Hole Inpainting Images}. (b) Comparison of peak signal-to-noise ratio (PSNR) across 3 methods with the CNTK providing the highest average PSNR.  Runtime and  structural similarity index measure (SSIM) for the three methods are provided in Appendix Fig.~\ref{fig: SI Large Hole Inpainting Tables}.}
    \label{fig: large hole inpainting comparison table}
\end{figure*}

\section{Image Inpainting and Reconstruction with the CNTK} 
\label{sec: Image Inpainting Experiments}
We now utilize the results of the previous section to perform large hole image inpainting and reconstruction.  As illustrated in Figs.~\ref{fig: Overview Schematic}c and \ref{fig: Overview Schematic}d, large hole inpainting involves imputing a large contiguous region in an image while image reconstruction involves imputing random missing pixels in an image.  Recent work \cite{DeepImagePrior} demonstrated that using convolutional neural networks with downsampling and upsampling layers to impute the missing pixels in images leads to competitive results for these applications.  

The methods from \cite{DeepImagePrior} are a special case of our framework in Eq.~[\ref{eq: Deep Matrix Factorization Framework}]; namely using convolutional layers and letting the \prior, $Z$, be a tensor with i.i.d.~uniform random entries.  Thus, we can use our framework for performing image completion tasks, and instead of training deep networks, we can simply solve kernel regression with the CNTK.  We will demonstrate that this gives rise to a simple, fast, flexible, and competitive alternative to training deep networks for high resolution image completion problems.  Moreover, we will demonstrate that our framework can be used to identify the role of architecture and \prior on image completion problems and aid in identifying effective architectures and \priors.  

\subsection*{Application 1: Large Hole Inpainting with the CNTK} 

We utilize the CNTK for large hole inpainting tasks from \cite{BayesianDeepImagePrior,DeepImagePrior}.  We compute the CNTK for the architecture used in \cite{BayesianDeepImagePrior} with 6 downsampling and nearest neighbor upsampling layers for the \prior $Z$ with i.i.d.~entries $Z_{\ell,i,j} \sim U[0, .1]$, where $c \in \mathbb{Z}_+$ and $i, j \in [m] \times [n]$.  We compute the CNTK on $128 \times 128$ resolution images and then expand it to the CNTK for high resolution images via our expansion technique in Section \ref{sec: Image Inpainting CNTK}.  We compare our method against neural networks of the same architecture using the training procedures from \cite{DeepImagePrior, BayesianDeepImagePrior} (see Appendix \ref{appendix: N} for details).  We also compare our method against inpainting with biharmonic functions \cite{BiharmonicInpainting}, which is currently the default inpainting method in scikit-image \cite{ScikitImage}.  



Figure \ref{fig: large hole inpainting comparison table}a shows examples of the resulting reconstructions, and Figure \ref{fig: large hole inpainting comparison table}b shows the peak signal-to-noise ratio (PSNR) across all methods.  Our method on average outperforms both inpainting with finite width neural networks and inpainting with biharmonic functions.\footnote{While the PSNR values for these images are also presented in \cite{BayesianDeepImagePrior}, they appear to be computed without replacement of the observed pixel values.  We re-ran these experiments with replacement for fair comparison with biharmonic inpainting.}  In Appendix Fig.~\ref{fig: SI Large Hole Inpainting Tables}, we show that our method also outperforms the other methods in terms of structural similarity index measure (SSIM), and that the runtime is comparable (within 2 minutes on average) across all methods in this setting.  The reconstructions across all images and methods are provided in Appendix Fig.~\ref{fig: SI Large Hole Inpainting Images}.

\subsection*{Application 2: Image Reconstruction with the CNTK} We next analyze the performance of the CNTK on the image reconstruction tasks considered in \cite{DeepImagePrior}.  While the networks considered in \cite{BayesianDeepImagePrior, DeepImagePrior} make use of skip connections for image reconstruction, we only consider architectures without skip connections for which we can derive the CNTK exactly (see Appendix \ref{appendix: N} for details).  We again compare the CNTK to neural networks of the same architecture and to biharmonic inpainting.  For this comparison, we use networks with 128 filters per layer, as is done in \cite{DeepImagePrior, BayesianDeepImagePrior}.   In Appendix Fig.~\ref{fig: SI Image Reconstruction Table}, we show that our model performs comparably to inpainting with biharmonic functions and outperforms neural networks of the same architecture.  In Appendix Fig.~\ref{fig: SI Image Reconstruction Table}, we additionally show that our method performs comparably to biharmonic inpainting in terms of SSIM and that our method is up to 10 times faster than using small width neural networks on the same hardware.  While our method performs comparably to inpainting with biharmonic functions in this application, our framework is more flexible, since we can adjust architecture and \prior, and it outperforms  inpainting with biharmonic functions for the problem of large hole inpainting (see above).  Since methods such as Adam with Langevin dynamics \cite{BayesianDeepImagePrior} have enabled performance boosts for neural networks (see Appendix Fig.~\ref{fig: SI Large Hole Inpainting Tables} \& \ref{fig: SI Large Hole Inpainting Images}), an interesting direction for future work could be to incorporate such techniques for image completion applications using the CNTK.

\begin{figure*}
    \centering
    \includegraphics[width=\textwidth]{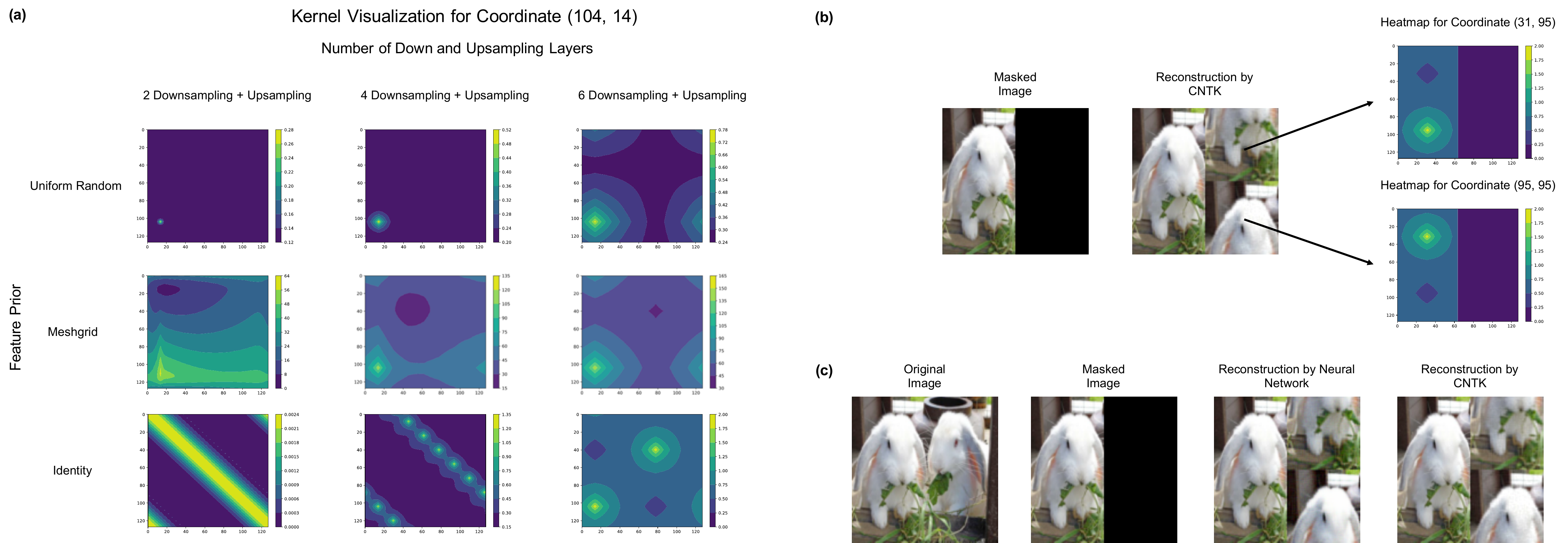} \caption{We use the CNTK to understand the impact of architecture and input on image inpainting. (a) Heatmap visualizations of the CNTK when varying the number of downsampling/upsampling layers and input.  The visualization makes clear that the uniform random \prior, unlike other \priors,  results in kernels that use the region surrounding a missing pixel value for imputation regardless of the number of downsampling layers.  (b) The heatmap visualizations of the CNTK make transparent which observed pixels are being used to inpaint a given missing pixel when using the identity \prior. (c) A comparison between inpainting a $128 \times 128$ resolution image of a rabbit with a finite width neural network and with the CNTK when the \prior is the identity. The CNTK is able to accurately predict the unexpected behavior of the neural network. }
    \label{fig:Heatmap Visualization}
\end{figure*}

\subsection*{Using Our Framework to Select \PriorAllCaps and Architecture for Image Completion} 
\label{sec: Impact of architecture on image inpainting}

In the following, we demonstrate that our framework provides a theoretical underpinning for understanding how a given architecture and \prior influence image completion.  In particular, we use our framework to explain why the uniform random \prior and architectures with downsampling and upsampling layers are effective for image completion while other \priors such as the identity \prior are ineffective for this application. 

The key observation enabling such interpretability is that for kernel methods, every prediction (a missing pixel value) is a linear combination of training examples (observed pixel values).  Hence, for each imputed pixel, the CNTK can be used to provide a heatmap describing which observed pixels were most heavily weighted in the linear combination.  In order to generate such heatmaps, we reshape the CNTK into a 4 dimensional tensor.  Namely, given a CNTK $K \in \mathbb{R}^{mn \times mn}$, we reshape $K$ to a tensor $K_T \in \mathbb{R}^{m \times n \times m \times n}$ where $K(M_{ij}, M_{i'j'}) = K_T(i, j, i', j')$.  To generate a heatmap for a given a coordinate $(i,j)$, we visualize the matrix $K_T(i, j, :, :) \in \mathbb{R}^{m \times n}$.  This visualization allows us to decipher how architecture and \prior change the resulting imputation from a neural network.


\subsubsection*{The Uniform Random \PriorAllCaps and \textcolor{black}{Modern} Architectures are Effective for Image Completion} In Fig.~\ref{fig:Heatmap Visualization}a, we visualize the kernel values $K(104, 14, :, :)$ computed for a $128 \times 128$ image when varying the number of down and upsampling layers and as well as the \prior $Z$.  Namely, we consider the cases where $Z$ is the identity, the meshgrid from \cite{DeepImagePrior}, or the uniform random tensor used in large hole inpainting experiments of \cite{DeepImagePrior}.  A key observation is that the kernel values for the uniform random \prior are highest around the coordinate of interest regardless of the amount of down and upsampling, which is in stark contrast to other \priors.\footnote{When there are no downsampling and upsampling layers, this follows immediately from Theorem \ref{theorem: Coordinate only CNTK}.}  This implies that neighboring pixels are most heavily used when imputing using the uniform random \prior (see Appendix Fig.~\ref{fig: SI Norms Kernels} for additional visualizations). Moreover, when using the uniform random \prior, the amount of down and upsampling increase (by powers of 2) the size of the region considered for imputation (see the first row of Fig.~\ref{fig:Heatmap Visualization}a).  These heatmaps identify the minimum amount of downsampling necessary for large hole inpainting: if there is an $m \times m$ region of missing pixels ($m \geq 1$), we need at least $\lfloor \log_2 (m + 1) \rfloor$ layers of downsampling to ensure that no pixel is filled in as an average of all other pixels.    This result explains the observation from \cite{DeepImagePrior}, which showed that using neural networks with four or fewer downsampling and upsampling layers led to worse large hole inpainting performance on images with large missing regions.


\subsubsection*{The Identity \PriorAllCaps is Ineffective for Image Completion}
The standard \prior for matrix completion is given by choosing $Z$ to be the identity matrix \cite{DeepMatrixFactorization, MatrixCompletionNuclearNorm, MatrixFactorizationNIPS2017}.  As shown in Fig.~\ref{fig:Heatmap Visualization}a, unlike the uniform random \prior, the identity \prior uses pixel observations from non-local regions for completion.  Thus, we expect this \prior to be ineffective for image completion tasks.



Fig.~\ref{fig:Heatmap Visualization}b 
shows the result of using the CNTK for a network with 6 downsampling and upsampling layers and the identity \prior to impute a $128 \times 128$ rabbit image.  The identity \prior visually appears to translate observed pixels from a non-local region to perform imputation.  The regions that are being translated are precisely those given by the corresponding heatmaps, e.g. the upper right quadrant is imputed using the lower left quadrant in Fig.~\ref{fig:Heatmap Visualization}b. 

We note that our framework accurately predicts the behavior of finite width neural networks used for image inpainting.  In Fig.~\ref{fig:Heatmap Visualization}c, we show the result of using a neural network with 6 downsampling and upsampling layers,  sigmoid activation on the last layer, and identity  \prior.  We observe that the neural network completes the image by translating observed pixels similarly to the imputation provided by the corresponding CNTK.  This example highlights the power of using our framework for rapidly prototyping \priors and architectures for image inpainting tasks.

\section{Discussion}



In this work, we presented a simple, fast, and flexible framework for matrix completion using the infinite width limit of neural networks, i.e. the neural tangent kernel (NTK).  Below, we highlight the aspects of our framework that enable such simplicity, speed, and flexibility.  

\begin{itemize}
    \item \textbf{Simple.} Our framework is conceptually simple since we are using kernels to learn a map from features of coordinates, $(i,j)$, to entries in the target matrix,  $Y_{i,j}$.  Our framework is computationally simple since solving kernel regression involves solving a linear system of equations.  
    \item \textbf{Fast.}  Our framework is naturally fast when using the NTK of fully connected networks for matrix completion due to the simple closed form of the kernel (Theorem \ref{theorem: d layer relu ntk}).  We develop a memory and runtime efficient algorithm to compute and use the NTK of convolutional networks (the CNTK) for matrix completion (Section \ref{sec: Image Inpainting CNTK}).  
    \item \textbf{Flexible.}  Our framework is easily adapted to various applications by the choice of the \prior, thereby making our framework flexible.  Moreover, we provided a principled approach for selecting the \prior by establishing a connection with semi-supervised learning (Theorems \ref{theorem: d layer relu ntk}, \ref{theorem: Coordinate only CNTK}) and providing a visualization of the effect of the \prior (Section 4).  
\end{itemize}

The simplicity and speed of our framework is illustrated by the fact that many of our results (including inpainting high resolution images) can be run on a CPU and even on a laptop (see Materials \& Methods for a link to our code).  We demonstrated that our framework is flexible by using it to achieve competitive results for virtual drug screening (Section \ref{sec: Drug Imputation with the NTK}) and image inpainting/reconstruction (Section \ref{sec: Image Inpainting Experiments}).  We envision that our work provides a simple and accessible framework for producing strong baselines for several matrix completion applications.  We conclude with a discussion of possible future extensions and applications.

\subsubsection*{Future Applications of Our Framework}  In this work, we demonstrated the flexibility of our framework by constructing \priors for two different applications, namely virtual drug screening and image completion.  An interesting future direction is the extension of our framework to other modalities such as tensors, video, or audio data.  For example, by using a \prior that captures the structure of coordinates in 3D  images, we could  apply our framework to impute missing regions in three-dimensional data.  

\subsubsection*{Efficient Computation of the CNTK} In classification and regression settings, a major hindrance for using the CNTK in practice is the computational complexity in computing the kernel for a large image dataset.  In this work, we presented an expansion technique to efficiently compute and store the exact CNTK for inpainting high resolution images, which was previously considered infeasible \cite{DenoisingNTK, BayesianDeepImagePrior}.  By understanding the properties of the CNTK that make it effective for image problems, we envision that similar techniques could be applied to produce efficient kernel machines for image classification.  

\subsubsection*{Developing Techniques to Improve the Performance of the NTK}   While a large number of techniques such as skip connections, batch normalization, etc. have been developed to augment the performance of neural networks, such techniques have yet to be adapted to improve the performance of kernels. The simplicity and effectiveness of the NTK and CNTK based on simple architectures considered in this work motivates the development of techniques to further boost the performance of the NTK and kernel methods in general.  

\section*{Materials and Methods}
For solving kernel regression with the NTK, we use the direct linear system solver from \cite{numpy1} when the number of equations is fewer than 30,000, and we use EigenPro \cite{EigenProGPU, EigenPro} otherwise.  For training neural networks, we use the PyTorch library \cite{PyTorch}.  All methods requiring a GPU are run on a single NVIDIA Titan RTX GPU.  Our experiments are run on a shared server with 4 Titan RTX GPUs, 128GB CPU RAM, and 64 threads.  

For the virtual drug screening experiments, we use the subset of the CMap dataset \cite{CMap} provided in \cite{SontagDrugImputation}.  A detailed description of all the methods (including random seeds and hyperparameters for DNPP and FaLRTC) and evaluation metrics for the virtual drug screening experiments is provided in Appendices \ref{appendix: C}-\ref{appendix: H}.  A description of the t-test used for determining the significance of our results for virtual drug screening is presented in Appendix \ref{appendix: I}.  We provide code to replicate our results for the virtual drug screening experiments with the NTK, DNPP, FaLRTC, and mean over cell type in the footnote below\footnote{ \href{https://github.com/uhlerlab/ntk_matrix_completion}{https://github.com/uhlerlab/ntk\_matrix\_completion}}.  We use the codebase from \cite{SontagDrugImputation} for performing imputation with FaLRTC.  

For the image completion applications, we use the datasets from \cite{BayesianDeepImagePrior, DeepImagePrior}.  The rabbit image used in Fig.~\ref{fig:Heatmap Visualization} is from \cite{ImageNet} and is provided in our codebase (linked above). For the neural network and NTK methods used in our image inpainting and reconstruction experiments, we provide a description of all architectures and training hyperparameters in Appendix \ref{appendix: N}.   

We provide a library for computing and using the CNTK for image inpainting and reconstruction applications in the codebase linked above.  Our library lets the user define a custom neural network (similarly to network definitions in PyTorch), and then provides a function to compute the CNTK from the given architecture.  Our method for computing the CNTK runs entirely on the CPU, and we enable parallelization across CPU threads.  Our library includes functions for computing the CNTK for networks with nearest neighbor and bilinear upsampling layers, which are not readily available in the Neural Tangents library \cite{NeuralTangentsGoogle}.  We additionally provide functions to solve kernel regression using the CNTK via a linear system solver or EigenPro.  A full description of the library and an example of how to use our library for image inpainting is provided in Jupyter notebooks in our linked code.  We additionally release several pre-computed kernels that can be used for high resolution inpainting and reconstruction.

\section*{Acknowledgements}

A.R., G.S., and C.U.~were partially supported by NSF (DMS-1651995), ONR (N00014-17-1-2147 and N00014-18-1-2765), the MIT-IBM Watson AI Lab, \textcolor{black}{the Eric and Wendy Schmidt Center at the Broad Institute,} and a Simons Investigator Award (to C.U.). M.B.~acknowledges support from NSF  IIS-1815697 and  NSF DMS-2031883/Simons Foundation  Award 814639.


\bibliographystyle{abbrv}
\bibliography{references}

\clearpage

\appendix

\section*{Appendix}

\section{Preliminaries on the NTK}
\label{appendix: A}

In this section, we review notation from prior literature on the NTK \cite{NTKJacot} that will be used throughout this work.  In particular, we review how the NTK can be computed in closed form using dual activation functions~\cite{DualActivation}.  We start by providing the definition of the NTK.

\begin{definition}[NTK] Let $f(\mathbf{W} ; x): \mathbb{R}^{p} \times \mathbb{R}^{d} \to \mathbb{R}$ denote a neural network with parameters $\mathbf{W}$.  The \textbf{neural tangent kernel}, $K:\mathbb{R}^{d} \times \mathbb{R}^{d} \to \mathbb{R}$, is a symmetric, continuous, positive definite function given by: 
\begin{align*}
    K(x, x') = \langle \nabla_{\mathbf{W}} f(\mathbf{W}^{(0)} ; x), \nabla_{\mathbf{W}} f(\mathbf{W}^{(0)} ; x') \rangle,
\end{align*}
where $\mathbf{W}^{(0)} \in \mathbb{R}^{p}$ denotes the parameters at initialization.  
\end{definition}

In this section, we consider fully connected networks of the following form: 
\begin{align}
\label{eq: Fully connected network}
    f(\mathbf{W}; x) = W^{(L)} \frac{c}{\sqrt{k_{L}}} \phi\left( W^{(L-1)} \frac{c}{\sqrt{k_{L-1}}} \phi \left( \ldots \frac{c}{\sqrt{k_1}} \phi \left(W^{(1)} x \right) \ldots  \right) \right), 
\end{align}
where $\mathbf{W} = \{W^{(i)}\}_{i=1}^{L}$ with $W^{(i)} \in \mathbb{R}^{k_{i} \times k_{i-1}}$ and $k_0 = d, k_{L} = 1$; $\phi: \mathbb{R} \to \mathbb{R}$ is an elementwise Lipschitz nonlinearity; and $c$ is a constant.  The key finding of \cite{NTKJacot} is that when $\mathbf{W}_i \overset{i.i.d.}{\sim} \mathcal{N}(0, 1)$, then as $k_1, k_2, \ldots k_L \to \infty$, $K_L(x, x')$ converges in probability to a deterministic kernel that does not change through training. Thus, solving kernel ridge-less regression with kernel $K_L$ is equivalent to the solution given by training the neural network.  We present the case for fully connected networks from \cite{NTKJacot} below, but will also be using the results for convolutional networks from \cite{CNTKArora} later on.  

\begin{theorem*}
\label{thm: NTK Recursion}
Let $f: \mathbb{R}^{d} \to \mathbb{R}$ be a neural network defined in Eq.~[\ref{eq: Fully connected network}].  As $k_1, k_2, \ldots k_L \to \infty$, then $K_L(x, x')$ converges in probability to a deterministic kernel given by the following recurrences in $\Sigma_i, \dot{\Sigma}_i, K_i$:
\begin{align*}
    K_0(x, x') &= \Sigma_0(x, x') =  x^T x', \\
    K_L(x, x') &= \Sigma_L(x, x') + K_{L-1}(x, x') \Sigma_{L-1}'(\Sigma_{L-1}(x, x')),\\
    \Sigma_L(x, x') &=  c^2 \mathbb{E}_{(u,v) \sim \mathcal{N}(\mathbf{0}, \Lambda_{L-1}(x, x') )} [\phi(u) \phi(v)], \\
    \dot{\Sigma}_L(x, x') &= c^2 \mathbb{E}_{(u,v) \sim \mathcal{N}(\mathbf{0}, \Lambda_{L-1}(x, x') )} [\phi'(u) \phi'(v)], \\
    \Lambda_L(x, x') &= 
    \begin{bmatrix} 
    \Sigma_{L-1}(x, x)  & \Sigma_{L-1}(x, x') \\
    \Sigma_{L-1}(x', x)  & \Sigma_{L-1}(x', x')    
    \end{bmatrix}. 
\end{align*}
\end{theorem*}

\noindent \textbf{Dual Activations.} The expectations in the recurrences above can be simplified using the theory of dual activation functions studied in \cite{DualActivation}.  Let $\check{\phi}: [-1, 1] \to \mathbb{R}$ such that:
\begin{align}
\label{eq: Dual Activation}
    & \check{\phi}(\xi) = c^2 \mathbb{E}_{(u,v) \sim \mathcal{N}(\mathbf{0}, \Lambda) )} [\phi(u) \phi(v)], 
    ~~ \Lambda = \begin{bmatrix} 1 & \xi \\ \xi & 1 \end{bmatrix}, ~~ \frac{1}{\sqrt{2\pi}} \int_{\mathbb{R}} \phi(u)^2 \exp{\left(-\frac{u^2}{2}\right)} du = \frac{1}{c^2}.  
\end{align}
The map $\mathcal{F}$ such that $\mathcal{F}(\phi) = \check{\phi}$ is an operator mapping from activation functions to positive definite functions\footnote{The map $\mathcal{F}$ is more precisely from the Hilbert space $L^2(\mu)$ with $\mu$ the Gaussian measure to the space of positive definite functions.  The factor $c$ is selected so that $\|\phi\|_{L^2(\mu)} = 1$.}, and $\check{\phi}$ is referred to as the \emph{dual activation} \cite{DualActivation}.  The scaling factor $c$ in Theorem \ref{thm: NTK Recursion} is typically selected to satisfy the integral equation in Eq.~[\ref{eq: Dual Activation}].  As an example, when $\phi$ is the ReLU, the integral is just $\frac{1}{2}$ times the second moment of the standard Gaussian distribution.  Hence, $c = \sqrt{2}$ for the ReLU.  The recurrence relation for the NTK can be drastically simplified for homogenous nonlinearities for which the dual activation has a closed form.  As shown in prior work \cite{CosineKernel, LeakyReLUNTK}, this is the case for the commonly used ReLU and LeakyReLU nonlinearities.  In particular, the dual activation function for ReLU is well known \cite{CosineKernel}, and we next present its form (with its derivative): 
\begin{lemma*}
The dual activation $\hat{\phi}: [-1, 1] \to \mathbb{R}$ of the ReLU is:
\begin{align}
\label{eq: ReLU Dual}
    \check{\phi}(\xi) &= \frac{1}{\pi}(\xi (\pi - \cos^{-1}(\xi)) + \sqrt{1 - \xi^2}) \\
    \frac{d\check{\phi}(\xi)}{d\xi} &= \frac{1}{\pi} (\pi - \cos^{-1}(\xi)) \nonumber
\end{align}
\end{lemma*}

As shown in \cite{InductiveBiasNTK, LaplaceAndNTK1}, the NTK recursion for ReLU networks can be simplified using the dual activation.  We provide this known simplification below for completeness.  

\begin{prop*}
Let $f: \mathbb{R}^{d} \to \mathbb{R}$ be a neural network defined in Eq.~\ref{eq: Fully connected network}.  Let $\phi$ be the ReLU activation and let $c = \sqrt{2}$.  As $k_1, k_2, \ldots k_L \to \infty$, then $K_L(x, x')$ converges in probability to a deterministic kernel given by the following recurrences in $\Sigma_i, K_i$:
\begin{align*}
    K_0(x, x') &= \Sigma_0(x, x') =  x^T x',  \\
    \Sigma_L(x, x') &= N_{L-1}(x, x') \check{\phi}\left(\frac{\Sigma_{L-1}(x, x')}{N_{L-1}(x, x')}\right),\\
    K_L(x, x') &= \Sigma_L(x, x') + K_{L-1}(x, x') \frac{d\check{\phi}}{d\xi} \left(\frac{\Sigma_{L-1}(x, x')}{N_{L-1}(x, x')}\right), \\
    N_{L-1}(x, x') &= \sqrt{\Sigma_{L-1}(x,x) \Sigma_{L-1}(x',x')}.
\end{align*}
\end{prop*}

This proposition follows from using the change of variables $u = \sqrt{\Sigma_L(x, x)} \tilde{u}$ and $v = \sqrt{\Sigma_L(x', x')} \tilde{v}$ and the homogeneity of ReLU when computing $c^2 \mathbb{E}_{(u,v) \sim \mathcal{N}(\mathbf{0}, \Lambda_{L-1}(x, x') )} [\phi(u) \phi(v)]$ using integration \cite{InductiveBiasNTK}.  In this work, we will use the dual activation for both ReLU and LeakyReLU \cite{LeakyReLU} in order to match popular deep learning architectures as closely as possible.  The derivation for the dual activation for LeakyReLU is provided in \cite{LeakyReLUNTK}.

\section{Proofs for Matrix Completion with the NTK}
\label{appendix: B}
\label{appendix: matrix completion with the NTK}
We present the statement of Theorem 1 for a general homogeneous (degree 1), Lipschitz nonlinearity below and then present the proof.  We again note that ReLU and LeakyReLU are commonly used nonlinearities that satisfy these conditions.  The results are easily extended to homogeneous nonlinearities of arbitrary degree and for \priors that have columns with arbitrary norm.  

\begin{theorem*}
Assume $Z = \{z^{(i)}\}_{i=1}^{n} \in \mathbb{R}^{p \times n}$, where each column is normalized with $\|z^{(i)} \|_2 = 1$.  Let $f_Z(\mathbf{W})$ be a $d$ layer fully connected network with Lipschitz nonlinearity $\phi$ that is homogeneous of degree 1 and $c= \| \phi \|_{L^2(\mu)}^{-1}$ where $L^2(\mu)$ is the Hilbert space of square Lebesgue integrable functions under Gaussian measure.  Then as layer widths $k_1 \to \infty, k_2 \to \infty, \ldots, k_{d-1} \to \infty$, the NTK for matrix completion with $f_Z(\mathbf{W})$ is given by 
\begin{align*}
    K_d(M_{ij}, M_{i'j'}) =  \begin{cases}
    \kappa_d({z^{(j)}}^T z^{(j')}) ~~~ \textrm{if } i = i' \\
    0 ~~~~~~~~~~~~~~~~~~ \textrm{if } i \neq i' 
    \end{cases},
\end{align*}
where $\kappa_d(\xi) = \check{\phi}^{(d)}(\xi) + \kappa_{d-1}(\xi) \frac{d\check{\phi}}{d\xi} (\check{\phi}^{(d-1)}(\xi))$, and $\check{\phi}^{(k)}(\xi) = \check{\phi}( \check{\phi}^{(k-1)}(\xi))$ for $k \geq 1$ and $\check{\phi}^{(0)}(\xi) = \xi$.
\end{theorem*}

\begin{proof}
We proceed by induction and present the case for $d=1$ first.  Namely, we define $g_Z(M)$ as follows: 
\begin{align*}
    g_Z(M) = tr(M^T A \phi(B Z)),
\end{align*}
where $A \in \mathbb{R}^{m \times k}, B \in \mathbb{R}^{k \times p}, Z \in \mathbb{R}^{p \times n}$.  To compute the kernel, we compute $\frac{\partial g_Z(M)}{\partial A_{\alpha, \beta}}, \frac{\partial g(M)}{\partial B_{\alpha, \beta}}$ directly.  We begin by expanding the matrix products in $g_Z(M)$.  For a matrix $U$, we let $U_{i,:}$ denote row $i$ of $U$ and $U_{:,i}$ denote column $i$ of $U$. Note that
\begin{align*}
    g_Z(M) &= tr\left(M^T A \frac{c}{\sqrt{k}}\phi(B Z)\right)\\
    &= \frac{c}{\sqrt{k}} tr\left(M^T A 
    \begin{bmatrix}
    \phi (B_{1,:} Z_{:,1}) & \ldots & \phi (B_{1,:} Z_{:,n}) \\
    \vdots & \ldots & \vdots \\
    \phi (B_{k,:} Z_{:,1}) & \ldots & \phi (B_{k:} Z_{:,n})
    \end{bmatrix} 
    \right) \\
    &= \frac{c}{\sqrt{k}} tr\left(M^T 
    \begin{bmatrix}
    \sum_{a=1}^{k} A_{1,a} \phi (B_{a,:} X_{:,1}) & \ldots & \sum_{a=1}^{k} A_{1,a} \phi (B_{a,:} Z_{:,n}) \\
    \vdots & \ldots & \vdots \\
    \sum_{a=1}^{k} A_{m,a} \phi (B_{a,:} X_{:,1}) & \ldots & \sum_{a=1}^{k} A_{m,a} \phi (B_{a,:} Z_{:,n})
    \end{bmatrix} 
    \right)\\
    &= \frac{c}{\sqrt{k}} \sum_{i=1}^{m}\sum_{j=1}^{n} M_{i,j} \sum_{a=1}^{k} A_{i,a} \phi(B_{a,:} Z_{:, j}).
\end{align*}

We thus have that
\begin{align*}
    \frac{\partial g_Z(M)}{\partial A_{\alpha, \beta}} &= \frac{c}{\sqrt{k}} \sum_{j=1}^{n} M_{\alpha, j} \phi (B_{\beta, :} Z_{:, j}), \\
    \frac{\partial g_Z(M)}{\partial B_{\alpha, \beta}} &= \frac{c}{\sqrt{k}} \sum_{i=1}^{m}\sum_{j=1}^{n} M_{i,j}  A_{i,\alpha} \phi(B_{\alpha,:} Z_{:, j}) Z_{\beta, j}.\\
\end{align*}

The NTK is given by: 
\begin{align*}
    K_1(M, \tilde{M}) &= \langle\nabla g_Z(M), \nabla g_Z(M')\rangle \\
    &=\sum_{\alpha=1}^{m} \sum_{\beta=1}^{k}\frac{\partial g_Z(M)}{\partial A_{\alpha, \beta}}\cdot \frac{\partial g_Z(M')}{\partial A_{\alpha, \beta}} + \sum_{\alpha=1}^{k} \sum_{\beta=1}^{p} \frac{\partial g_Z(M)}{\partial B_{\alpha, \beta}}\cdot \frac{\partial g_Z(M')}{\partial B_{\alpha, \beta}}.
\end{align*}
To simplify the computation, we note that we will only ever need the gradient at indicator matrices $M_{ij}$ and $M_{i'j'}$.  Moreover, from the formula for the partial derivatives, we conclude that 
\begin{align*}
    \frac{\partial g_Z(M_{ij})}{\partial A_{\alpha, \beta}} &= \begin{cases}
    0 & \text{if $\alpha \neq i$} \\
    \frac{c}{\sqrt{k}} \phi(B_{\beta,:} Z_{:, j}) & \text{otherwise}
    \end{cases}, \\
    \frac{\partial g_Z(M_{ij})}{\partial B_{\alpha, \beta}} &= \frac{c}{\sqrt{k}} A_{i, \alpha} \phi(B_{\alpha, :} Z_{:, j}) Z_{\beta, j}.\\
\end{align*}
Thus, we can simplify the NTK as follows: 
\begin{align*}
    \lim_{k \to \infty} K_1(M_{ij}, M_{i'j'}) &= \lim_{k \to \infty}  \frac{c^2}{k}\sum_{\beta=1}^{k} \phi(B_{\beta, :} Z_{:, j}) \phi(B_{\beta, :} Z_{:, j'}) \mathbf{1}_{i = i'} \\
    &~~~~~~~~~~ + \frac{c^2}{k}\sum_{\alpha=1}^{k} A_{i, \alpha} A_{i', \alpha} \sum_{\beta=1}^{p} \phi'(B_{\beta, :} Z_{:, j})  \phi'(B_{\beta, :} Z_{:, j'})  Z_{\beta, j} Z_{\beta, j'} \\
    & = \begin{cases}
    0 & i \neq i' \\
    \kappa_1 ({z^{(j)}}^T z^{(j')}) & i = i'
    \end{cases}, 
\end{align*}
which completes the base case.

For the inductive step, we assume that
\begin{align*}
    \lim_{k_{d-2} \to \infty} \ldots \lim_{k_{1} \to \infty}  K_{d-1}(M_{ij}, M_{i'j'}) &= \begin{cases}
    0 & i \neq i' \\
    \kappa_{d-1} ({z^{(j)}}^T z^{(j')}) & i = i'
    \end{cases}.
\end{align*}
We now show that $K_d(M_{ij}, M_{i'j'})$ has the desired form.  For this, we define:
\begin{align*}
    g_Z(M) = M^T A \frac{c}{\sqrt{k_{d-1}}}\phi( h_Z(\mathbf{W})),
\end{align*}
where $A \in \mathbb{R}^{m \times k_{d-1}}$ and $h_Z(\mathbf{W}): \mathbb{R}^{p \times n} \to \mathbb{R}^{k_{d-1} \times n}$ is a $d-1$ fully connected network operating on $Z$.  Following the computation for the $1$ layer case, we obtain 
\begin{align*}
    \frac{\partial g_Z(M)}{\partial A_{\alpha, \beta}} &= \frac{c}{\sqrt{k_{d-1}}} \sum_{j=1}^{n} M_{\alpha, j} \phi( h_Z(\mathbf{W}))_{\beta, j}, \\
    \frac{\partial g_Z(M)}{\partial \mathbf{W}_{\alpha, \beta}} &= \frac{c}{\sqrt{k_{d-1}}} \sum_{i=1}^{m} \sum_{j=1}^{n} M_{i,j}  \sum_{k=1}^{k_{d-1}} A_{i, k} \frac{\partial{\phi(h_Z(\mathbf{W}))_{k, j}}}{\partial \mathbf{W}_{\alpha, \beta}}.
\end{align*}
Now we consider the case of indicator matrices $M_{ij}, M_{i'j'}$.  For $M_{ij}$, we note that $ \frac{\partial g_Z(M)}{\partial A_{\alpha, \beta}}$ is only non-zero for the terms
\begin{align*}
     \frac{\partial g_Z(M_{ij})}{\partial A_{i, \beta}} = \frac{c}{\sqrt{k_{d-1}}} \phi( h_Z(\mathbf{W}))_{\beta, j}.
\end{align*}
Hence, if $i \neq i'$, we obtain that 
\begin{align*}
    \sum_{\alpha, \beta} \frac{\partial g_Z(M_{ij})}{\partial A_{\alpha, \beta}} \frac{\partial g_Z(M_{i'j'})}{\partial A_{\alpha, \beta}} = 0.
\end{align*}
Similarly, for $M_{ij}$, we have that 
\begin{align*}
    \frac{\partial g_Z(M_{ij})}{\partial \mathbf{W}_{\alpha, \beta}} &= \frac{c}{\sqrt{k_{d-1}}} \sum_{k=1}^{k_{d-1}} A_{i, k} \frac{\partial{\phi(h_Z(\mathbf{W}))_{k, j}}}{\partial \mathbf{W}_{\alpha, \beta}}.
\end{align*}
If $i \neq i'$, as $k_{d-1} \to \infty$, by law of large numbers: 
\begin{align*}
\sum_{\alpha, \beta}  \frac{\partial g_Z(M_{ij})}{\partial B_{\alpha, \beta}} \frac{\partial g_Z(M_{i'j'})}{\partial B_{\alpha, \beta}} \to 0.
\end{align*}
Thus, if $i \neq i'$, we conclude that $K_d(M_{ij}, M_{i'j'}) = 0$. On the other hand, if $i = i'$, then we have that 
\begin{align}
\label{eq: A term contributions to NTK}
    \sum_{\alpha, \beta}  \frac{\partial g_Z(M_{ij})}{\partial A_{\alpha, \beta}} \frac{\partial g_Z(M_{i'j'})}{\partial A_{\alpha, \beta}}  = \frac{c^2}{k_{d-1}} \sum_{k=1}^{k_{d-1}} \phi( h_Z(\mathbf{W}))_{k, j} \phi( h_Z(\mathbf{W}))_{k, j'}.
\end{align}
Similarly, if $i = i'$, we have that 
\begin{align*}
    \sum_{\alpha, \beta}  \frac{\partial g_Z(M_{ij})}{\partial B_{\alpha, \beta}} \frac{\partial g_Z(M_{i'j'})}{\partial B_{\alpha, \beta}} =  \frac{c^2}{k_{d-1}} \left(\sum_{k=1}^{k_{d-1}} A_{i, k} \phi'(h_Z(\mathbf{W}))_{k, j} \frac{\partial{h_Z(\mathbf{W})_{k, j}}}{\partial \mathbf{W}_{\alpha, \beta}}   \right) \left(\sum_{k=1}^{k_{d-1}} A_{i, k} \phi'(h_Z(\mathbf{W}))_{k, j'} \frac{\partial{h_Z(\mathbf{W})_{k, j'}}}{\partial \mathbf{W}_{\alpha, \beta}}\right)
\end{align*}
By the inductive hypothesis as $k_1, k_2, \ldots k_{d-2} \to \infty$, the above converges in probability to:
\begin{align}
\label{eq: B term contributions to NTK}
     \lim_{k_{d-2} \to \infty}  \ldots \lim_{k_{1} \to \infty}  \sum_{\alpha, \beta}  \frac{\partial g_Z(M_{ij})}{\partial B_{\alpha, \beta}} \frac{\partial g_Z(M_{i'j'})}{\partial B_{\alpha, \beta}} \to
     \frac{c^2}{k_{d-1}} \left(\sum_{k=1}^{k_{d-1}}  \phi'(h_Z(\mathbf{W}))_{k, j}  \phi'(h_Z(\mathbf{W}))_{k, j'} K_{d-1}(M_{k,j}, M_{k,j'}) \right). 
\end{align}
Therefore, when $i=i'$, adding Eqs.~[\ref{eq: A term contributions to NTK}] and [\ref{eq: B term contributions to NTK}] and applying the inductive hypothesis yields: 
\begin{align*}
    \lim_{k_{d-1} \to \infty}  \lim_{k_{d-2} \to \infty} \ldots \lim_{k_{1} \to \infty} K_d(M_{ij}, M_{i'j'}) = \check{\phi}(\check{\phi}^{(d-1)}(\langle z^{(j)}, z^{(j')}) \rangle) + K_{d-1} \frac{d \check{\phi}}{d \xi} \left( \check{\phi}^{(d-1)} (\langle z^{(j)}, z^{(j')} \rangle ))  \right),
\end{align*}
which completes the proof.
\end{proof}

We next provide an example showing how to compute the NTK for matrix completion.  

\begin{example}
Suppose we have: 
\begin{align*}
    Y = \begin{bmatrix} 
    y_{11} & .5 & .3 \\ 
    .1 & .2 & y_{23} \\
    .4 &  y_{32} & y_{33} \\
    \end{bmatrix}.
\end{align*}
Assuming we read off the observed entries of $Y$ in row major order and that $Z = I$ (the $3 \times 3$ identity matrix), then the NTK is given by:
\begin{align*}
    K = \begin{bmatrix} 
    \kappa(1) & \kappa(0) & 0 & 0 & 0 \\
    \kappa(0) & \kappa(1) & 0 & 0 & 0 \\
    0 & 0 & \kappa(1) & \kappa(0) & 0 \\
    0 & 0 & \kappa(0) & \kappa(1) & 0 \\
    0 & 0 & 0 & 0 & \kappa(1) 
    \end{bmatrix}. 
\end{align*}
The solution to kernel regression is given by: 
\begin{align*}
    \tilde{g}(M) = \begin{bmatrix} .5 & .3 & .1 & .2 & .4 \end{bmatrix} K^{-1} k(M),
\end{align*}
where $k(M)$ is the vector with entries $k(M_{ij}, M)$  for $(i,j) \in S$. As an example, for $M_{11}$, we have: 
\begin{align*}
    k(M_{11}) = \begin{bmatrix} \kappa(0) & \kappa(0) & 0 & 0 & 0  \end{bmatrix}^T.
\end{align*}
\end{example}

\vspace{0.2cm}
\textcolor{black}{This example demonstrates the key difference between the NTK of fully connected networks for matrix completion and the usual multivariate NTK: namely, the former corresponds to solving a \emph{separate} kernel regression problem for each row of the target matrix $Y$.}  By modifying the nonlinearity $\phi$ and the \prior in Theorem 1, our framework encapsulates a broad class of semi-supervised learning approaches for matrix completion.  We provide a nontrivial example below.  

\begin{example}[Semi-supervised Learning with the Graph Laplacian]
The following corollary to Theorem 2 proves that semi-supervised learning using the graph Laplacian operator from \cite{SemiSupervisedLearningManifolds} is a specific instance of matrix completion with the NTK of a linear neural network used for matrix completion.  

\begin{corollary*}
\label{corollary: graph Laplacian}
Let $X \in \mathbb{R}^{d \times n}$ denote a set of data points of which a subset $X_S \in \mathbb{R}^{d \times s}$ is labelled with labels $Y_S \in \mathbb{R}^{1 \times s}$.  Let $Z \in \mathbb{R}^{p \times n}$ denote the projection of $X$ onto the top $p$ eigenvectors of the graph Laplacian.  Let $g_Z(M) = tr(M^T A \frac{1}{\sqrt{2k}} B Z)$ for $ A \in \mathbb{R}^{1 \times k}, B \in \mathbb{R}^{k \times p}$.  Then as $k \to \infty$, the following are equivalent:
\begin{align*}
    \argmin_{A, B} \sum_{(i,j) \in S} (Y_{ij} - g_Z(M_{ij}))^2 \Longleftrightarrow \argmin_{w \in \mathbb{R}^{p}} \| Y_{S} - w Z\|_2^2 
\end{align*}
in the sense that $g_Z(M_{ij}) = wZ_{:, j}$.
\end{corollary*}
\end{example}

The proof follows immediately from Theorem 1 and the fact that the dual activation for $\phi(x) = x$ is $\check{\phi}(\xi) = \xi$. The example above illustrates the generality of our framework for matrix completion.  Moreover, semi-supervised learning with the graph Laplacian can naturally be extended by using the NTK for a nonlinear neural network instead of a linear neural network. Namely, instead of using the eigenvectors of the graph Laplacian, we can naturally extend the above corollary by using embeddings produced by autoencoders (Ch. 14 of \cite{goodfellow2016deep}).

\textcolor{black}{Note that the flexibility to learn a low-rank imputation or imputation with other structures via our framework is given by the feature prior, which incorporates the relationships between the coordinates of the target matrix.  Indeed, varying the feature prior can drastically change the imputation given by the NTK, and the NTK with appropriate \prior can even produce low-rank imputations, as shown by the following example below.  }

\textcolor{black}{\begin{example} Consider the Netflix problem of movie rating imputation.  Suppose the target matrix $Y$ is of the form 
\begin{align*}
    Y = \begin{bmatrix} 1 & 2 & y_{13} \\ 1 & 2 & 3 \end{bmatrix},
\end{align*}
where the rows of $Y$ represent users, the columns represent movies, and the coordinate $Y_{ij}$ represents the rating (from 1 to 5 stars) a user $i$ gave to movie $j$.  By first flattening the matrix $Y$ into $Y_v = [1, 2, y_{13}, 1, 2, 3]$, and then using our framework with feature prior, 
\begin{align*}
    Z = \begin{bmatrix} 1 & 0 & 0 & 1 & 0 & 0 \\ 0 & 1 & 0 & 0 & 1 & 0 \\ 0 & 0 & 1 & 0 & 0 & 1 \end{bmatrix} , 
\end{align*}
leads to a low rank imputed matrix
\begin{align*}
    \hat{Y} = \begin{bmatrix} 1 & 2 & 3 \\ 1 & 2 & 3\end{bmatrix}.
\end{align*}
\end{example}
The above example is simplistic in that it produces a low rank imputation by assuming that the users are identical and using a one-hot embedding for the movies. In practice, one would use a feature prior that embeds users via external metadata (e.g.~user age, gender, etc.) and our framework would predict similar ratings for users with similar metadata.}

\section{Experimental Details for Virtual Drug Screening in CMAP}
\label{appendix: C}

For this application, we consider the 978 genes $\times$ 2,130 drugs $\times$ 71 cell types ``large'' tensor from \cite{SontagDrugImputation}. From this tensor, we extract the 15,855 non-null values, and leave out the cell types (‘SNU1040’, ‘HEK293T’, ‘HS27A’), as they have less than 10 drugs in the dataset (i.e. for these cell types, we would not be able to perform 10-fold cross validation).  We exclude MCF7 from the dataset when using our method, since we use it to compute our feature prior, but we give all other methods training access to all MCF7 observations to ensure a fair comparison.  This leaves us with a dataset of 14,336 samples, which are used for imputation.   A link to download this dataset is given in \cite{SontagDrugImputation}, which we repeat here for convenience: \url{https://github.com/clinicalml/dgc_predict}. 

For training DNPP and FaLRTC, we use the same hyper-parameters as in \cite{SontagDrugImputation}.  We implemented DNPP, mean over cell type, and our framework in Python in the above link.  We use the Matlab code from \cite{SontagDrugImputation} located via the following link: \href{https://github.com/clinicalml/dgc_predict/blob/b8bff6d757fc757aadf39034b9972db37c6da983/matlab/thirdparty/visual/FaLRTC.m}{\texttt{https://github.com/clinicalml/dgc\_predict/FaLRTC.m}}. 
In order to make our results for FaLRTC accessible without Matlab, we provide the imputations from FaLRTC in the following folder:
\url{https://www.dropbox.com/sh/w23viwbm3py1dq1/AADQD3Bi_bLx4Z7X2hcLoUzXa?dl=0}.




\section{Feature Prior for Drug Response Imputation}
\label{appendix: D}
DNPP performs well for imputing the effect of drugs on cell types that have many observations in the training set, but performs poorly when imputing the effect of drugs on cell types with few observations in the training set.  Thus, to improve on DNPP, we use a dual \prior: one for imputing the effect of drugs on cell types with many (at least 150) observations in the training set (the dense regime), and another for imputing the effect of drugs on cell types with few (at most 150) observations in the training set (the sparse regime).  

Since DNPP and FaLRTC both yield an imputation that captures similarity between cell type and drug combinations in the large observation regime, we can use the output of one of these methods as the \prior for those cell types that had greater than 150 drugs in the training set.  In particular, we chose the output of FaLRTC for the \prior in the dense regime since applying our method with this \prior yielded superior results. For all observed examples that were in the training set, we use the gene expression for the observation itself as the encoding.   For all \priors, we additionally concatenated a constant (1.5) times the identity matrix to ensure that the corresponding kernel is positive definite\footnote{We chose the constant 1.5 by tuning this parameter to give highest Pearson r value on seed 512.  We then used this constant for all other random seeds.}.  We then solved kernel regression exactly (using the numpy solve function \cite{numpy1}) for the NTK of a 1-hidden layer ReLU network.  

For those cell types with few (less than 150 observations) in the training set, we used a \prior that concatenates an embedding of the cell type and an embedding of the drug type. For the drug embedding, we used the gene expression of MCF7 treated with the same drug as the drug embedding, if available in the training set.  If this vector was not available in the training set, we simply used the mean of all MCF7 observations.  For the cell type embedding, we used the mean of all observations for the corresponding cell type available in the training set.  We then normalized each cell embedding to have the same norm as the drug embedding to balance their contributions to dot products computed for the kernel.  We re-scaled the embedding for the cell type by a factor of 1.25 to give the cell type additional weight over drug type\footnote{This hyperparameter was selected to maximize Pearson r value for seed 512 and then fixed across all other random seeds.}.  Lastly, we normalized the concatenation of the embeddings and solved kernel regression via the closed form in Theorem 1.  We refer to this \prior as the \textit{MCF7 reference prior}.  

The code for computing our \priors is available at \url{https://github.com/uhlerlab/ntk_matrix_completion}. 

\section{One-hot Encoding for Drugs is Equivalent to Imputation with Mean Over Cell Type}
\label{appendix: E}

The following result shows that using a \prior consisting of a one-hot embedding for drugs leads to performing imputation using the mean over all observations for a given cell type.

\begin{prop}
Let $Y \in \mathbb{R}^{m \times n}$ denote the gene expression vectors for cell type $c$ with drugs $\{d_j\}_{j=1}^{n}$, such that columns $\{y^{(j)}\}_{i=1}^{\ell}$ are observed and columns $\{y^{(j)}\}_{i=\ell+1}^{n}$ are missing.  Let $A \in \mathbb{R}^{m \times k}, B \in \mathbb{R}^{k \times p}$, $\phi(x) = \max(x, 0)$,  $g: \mathbb{R}^{m \times n} \to \mathbb{R}$ such that: 
\begin{align*}
    g(M) = tr\left(M^T A \frac{\sqrt{2}}{\sqrt{k \ell}} \phi(BZ) \right),
\end{align*}
where $Z = I_{n \times n}$ (i.e. a one-hot encoding of the drug).  Then for $i \in [m], j > \ell$,  the solution to kernel ridge-less regression with the NTK for $g$ is: 
\begin{align*}
\tilde{g}(M_{ij}) = \left(\frac{1}{2\pi - 1} - \frac{\ell}{(2\pi -1)(2\pi - 1 + \pi \ell) }\right) \left(\frac{1}{\ell}\sum_{j=1}^{\ell} y_i^{(j)}\right). 
\end{align*}
\end{prop}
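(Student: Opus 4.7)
The plan is to apply Theorem 1 to obtain a closed form for the NTK, exploit the block structure of the resulting kernel to reduce the problem to an $\ell \times \ell$ linear system for each fixed row $i$, and then solve that system explicitly using the Sherman--Morrison formula.

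First I would invoke Theorem 1 with depth $d = 1$, ReLU activation, and $Z = I_{n \times n}$. Since the columns of $Z$ are distinct standard basis vectors of unit norm, $\langle z^{(j)}, z^{(j')} \rangle = \delta_{j,j'}$, and Theorem 1 gives $K(M_{ij}, M_{i'j'}) = \delta_{i,i'} \, \kappa_1(\delta_{j,j'})$ where $\kappa_1(\xi) = \check{\phi}(\xi) + \xi\, \check{\phi}'(\xi)$ and $\check{\phi}$ is the ReLU dual activation computed in Appendix A. The key structural observation is that the kernel matrix over all observed coordinates is block-diagonal across the row index $i$: observations sitting in distinct rows of $Y$ do not communicate through the kernel. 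Consequently the kernel regression problem decouples row by row, and the prediction at $M_{ij}$ only depends on the observed entries $y_i^{(1)}, \ldots, y_i^{(\ell)}$ in the same row $i$. The multiplicative factor $\tfrac{\sqrt{2}}{\sqrt{k\ell}}$ in the network definition is irrelevant here, since a uniform rescaling of the output scales $\hat{K}$ and $k(M)$ by the same constant and thus leaves the ridge-less kernel regression solution invariant.

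Next I would compute the per-row kernel matrix and kernel vector. Using the standard values $\check{\phi}(1) = 1,\ \check{\phi}'(1) = 1,\ \check{\phi}(0) = 1/\pi$, one gets $\kappa_1(1) = 2$ and $\kappa_1(0) = 1/\pi$. Hence the per-row Gram matrix has the rank-one-plus-diagonal form
\begin{equation*}
\hat{K}^{(i)} = aI + bJ, \qquad a = \tfrac{2\pi - 1}{\pi},\quad b = \tfrac{1}{\pi},
\end{equation*}
where $J$ is the $\ell \times \ell$ all-ones matrix. For a target coordinate $(i,j)$ with $j > \ell$, the kernel vector $k^{(i)}(M_{ij}) \in \mathbb{R}^\ell$ has constant entries equal to $\kappa_1(0) = 1/\pi$, because $j$ differs from every observed column index.

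I would then apply the Sherman--Morrison identity to invert $\hat{K}^{(i)}$, yielding $(\hat{K}^{(i)})^{-1} = \tfrac{1}{a} I - \tfrac{b}{a(a + b\ell)} J$. Hitting the constant kernel vector $(1/\pi)\mathbf{1}$ against this produces a constant vector whose entries equal $\gamma = \tfrac{b}{a} - \tfrac{b^2 \ell}{a(a + b\ell)}$. The prediction therefore becomes
\begin{equation*}
\tilde{g}(M_{ij}) \;=\; (y^{(i)})^\top (\hat{K}^{(i)})^{-1} k^{(i)}(M_{ij}) \;=\; \gamma \sum_{j' = 1}^{\ell} y_i^{(j')} \;=\; \gamma\, \ell\, \bar{y}_i,
\end{equation*}
a scalar multiple of the mean over observed drugs for cell type $c$, as claimed. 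Substituting $a = (2\pi - 1)/\pi$ and $b = 1/\pi$ into $\gamma$ and simplifying reproduces the scalar factor stated in the proposition (up to the simple algebraic identity $\frac{1}{a} - \frac{b\ell}{a(a + b\ell)} = \frac{1}{a + b\ell}$, which makes explicit that the prediction tends to $\bar{y}_i$ as $\ell \to \infty$).

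The hard part is essentially bookkeeping rather than conceptual: the two points to be careful about are (i) confirming that the $B$-derivative contribution to $K(M_{ij}, M_{ij'})$ for $j \neq j'$ is zero, which follows because that contribution carries a factor of $\langle z^{(j)}, z^{(j')} \rangle$ that vanishes under $Z = I$, so the off-diagonal kernel values come purely from the $A$-derivative term, and (ii) tracking the constants through Sherman--Morrison to land on the exact form in the statement. Everything else is an immediate consequence of the block-diagonal structure of the NTK guaranteed by Theorem 1.
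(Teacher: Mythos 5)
Your proposal follows essentially the same route as the paper's proof: exploit the block-diagonal structure of the NTK across rows, identify each $\ell\times\ell$ block as $aI+bJ$ with diagonal $\kappa_1(1)=2$ and off-diagonal $\kappa_1(0)=1/\pi$, note the constant kernel vector at an unobserved column, and invert via Sherman--Morrison; your observation that the $\sqrt{2/(k\ell)}$ scaling cancels in ridge-less regression is a clean shortcut the paper handles by carrying the $1/\ell$ factor explicitly. One caveat: your simplified coefficient $\tfrac{b}{a+b\ell}=\tfrac{1}{2\pi-1+\ell}$ does not literally equal the scalar printed in the proposition, which reads $\bigl(\tfrac{1}{2\pi-1}-\tfrac{\ell}{(2\pi-1)(2\pi-1+\pi\ell)}\bigr)\tfrac{1}{\ell}$ applied to the mean --- so your closing claim that substitution ``reproduces the scalar factor stated'' should be checked rather than asserted; the derivation itself is sound and the qualitative conclusion (a shrunken mean over observed drugs, tending to $\bar y_i$ as $\ell\to\infty$) is unaffected.
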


\begin{proof}  
The proof relies on the fact that the kernel matrix $K$ for $g$ is a block diagonal matrix.  In particular, as shown in the example in Section 3, there is one block, $K_{B_i} \in \mathbb{R}^{\ell \times \ell}$, for each row of $Y$ (i.e. $m$ blocks), and $K_{B_i}$ has diagonal entries $\kappa(1) = 2$ and off-diagonal entries $\kappa(0) = \frac{1}{\pi}$.  Hence, each block of the kernel matrix can be written as: 
\begin{align*}
    K_{B_i} = \frac{1}{\ell} \left(\left(2 - \frac{1}{\pi}\right) I_{\ell \times \ell} + \frac{1}{\pi} J \right),
\end{align*}
where $J \in \mathbb{R}^{\ell \times \ell}$ is the all ones matrix.   By the Sherman-Morrison formula, 
\begin{align*}
    K_{B_i}^{-1} = \frac{1}{\ell}\left(\frac{\pi}{2\pi -1} I - \frac{\pi}{(2\pi - 1)(2\pi - 1 + \pi \ell)} J \right),
\end{align*}
and thus 
\begin{align*}
    \tilde{g}(M_{ij}) = \begin{bmatrix}y_i^{(1)} & y_i^{(2)} & \ldots & y_i^{(\ell)} \end{bmatrix} K_{B_i}^{-1} \mathbf{1}  \frac{1}{\pi \ell},
\end{align*}
where $\mathbf{1} \in \mathbb{R}^{\ell}$ is the all ones vector.  Hence, 
\begin{align*}
    \tilde{g}(M_{ij}) = \left(\frac{1}{2\pi - 1} - \frac{\ell}{(2\pi -1)(2\pi - 1 + \pi \ell) }\right) \left( \frac{1}{\ell} \sum_{j=1}^{\ell} y_{i}^{(j)} \right),
\end{align*}
which completes the proof.
\end{proof}

\section{\PriorAllCaps Corresponding to Previous Algorithms} 
\label{appendix: F}
As discussed in Section 2 of the main text, our framework provides a direct approach for improving upon previous methods for virtual drug screening.  Using the output of DNPP and FaLRTC as the \prior in our framework leads to an improvement; namely, across every round and fold in 5 rounds of 10-fold cross validation (using seeds $149, 10, 53, 77, 1928$), we find that our method with the DNPP output as a \prior outperforms DNPP and that our method with the FaLRTC output as a \prior outperforms FaLRTC.  This is demonstrated in Figs.~\ref{fig: SI DNPP vs. DNPP Prior} and \ref{fig: SI FaLRTC vs. FaLRTC Prior}.  

\section{Performance of Methods on Sparse versus Dense Subsets}
\label{appendix: G}
We demonstrate in Fig.~\ref{fig: SI Dense vs. Sparse Regime} that DNPP is effective for imputation on the dense regime (i.e.~for those drug/cell type pairs with over 150 profiles), but  not as effective in the sparse regime (i.e.~for those drug/cell type pairs with less than 150 profiles).  FaLRTC seems to perform comparably between the dense and the sparse regime, but under-performs DNPP on the full dataset.

\section{Metrics for Evaluation in Drug Response Imputation}
\label{appendix: H}
Let $\hat{Y} \in \mathbb{R}^{m \times n}$ denote the concatenatation of the test predictions for all 10 folds and let $Y^* \in \mathbb{R}^{m \times n}$ denote the ground truth.  We use ${y^*}^{(i)}$  to denote the $i^{th}$ column of $Y^*$.  Let $\bar{y}^{(i)} = c_i \mathbf{1}$ where $c_i = \sum_{j=1}^{m} y_j^{(i)}$.   For $A \in \mathbb{R}^{a \times b}$, let $A_v \in \mathbb{R}^{a \cdot b}$ denote the vectorized version of $A$.  We use the following 3 metrics for evaluating the effectiveness of a given imputation method.   All evaluation metrics have a maximum value of $1$. 
\vspace{2mm}

1. \textit{Pearson r value}: This evaluation metric was used in \cite{SontagDrugImputation} and is given by:  
\begin{align*}
    v = \frac{\langle \hat{Y}_v, Y_v^* \rangle}{\|\hat{Y}_v\|_2 \|Y_v^*\|_2}.
\end{align*}

\vspace{2mm}

2.\textit{Mean $R^2$}: This evaluation metric is given by: 
\begin{align*}
    v = \frac{1}{n} \sum_{i=1}^{n} \left( 1 - \frac{\sum_{j=1}^{m}( \hat{y}_j^{(i)} - {y_j^*}^{(i)})^2} {\sum_{j=1}^{m} ({y_j^*}^{(i)} - \bar{y}_j^{(i)} )^2 }\right).
\end{align*}
\vspace{2mm}

3. \textit{Mean Cosine Similarity}: This evaluation metric is given by: 
\begin{align*}
    v = \frac{1}{n} \sum_{i=1}^{n} \frac{\langle \hat{y}^{(i)}, {y^*}^{(i)} \rangle }{\|\hat{y}^{(i)}\|_2 \|{y^*}^{(i)}\|_2}.
\end{align*}

\section{Statistical Significance of NTK on Drug Response Imputation} 
\label{appendix: I}
In experiments on the full dataset, we use 10-fold cross validation and 5 random seeds (149, 10, 77, 53, 1928) for comparing our method to DNPP from \cite{SontagDrugImputation}. For each fold, we ensure that $10\%$ of the drugs for each cell type are present in the test set.  To determine the statistical significance of  our method for improving over DNPP, we use a one-sided test with the following corrected repeated k-fold cv test statistic for $r$ rounds of $k$-fold cross validation (as described in Section 3.3 of \cite{CrossValidationTTest}): 
\begin{align*}
    t = \frac{\frac{1}{kr} \sum_{i=1}^{k} \sum_{j=1}^{r} d_{ij} }{\left( \frac{1}{kr} + \frac{n_2}{n_1}\right)\hat{\sigma}^2},
\end{align*}
where $d_{ij}$ is the difference between the evaluation metric for our method (the output of FaLRTC as the \prior for the dense regime and the MCF7 reference \prior for the sparse regime) and that of the DNPP for fold $k$ of round $j$, $\hat{\sigma}$ is the estimated variance of the differences $d_{ij}$, and $n_1$ is the number of samples used for training and $n_2$ is the number of samples used for testing (i.e. $\frac{n_2}{n1} \approx \frac{1}{9}$ for our setting).  This statistic is distributed according to a t-distribution with $kr - 1$ degrees of freedom.  For the mean $R^2$, we obtain $t = 18.29$ and a corresponding p-value of $7.7 \cdot 10^{-24}$.  For the mean cosine similarity, we obtain $t = 14.75$ and a p-value of $5.9 \cdot 10^{-20}$.  Thus, at a significance level of $.01$, we reject the null hypothesis that our method and DNPP have the same performance.

\section{Matrix Completion with the CNTK}
\label{appendix: J}
We repeat Proposition 1 from the main text and present the proof below. The tensor $\Theta \in \mathbb{R}^{m \times n \times m \times n}$ was defined and used in the computation of the CNTK for classification in \cite{CNTKArora}.  

\begin{prop*}
\label{prop: CNTK Matrix Completion}
Let $f_Z(\mathbf{W})$ be a $d$ layer convolutional network used to map from the  \prior $Z \in \mathbb{R}^{c \times r \times s}$ to the target matrix $Y \in \mathbb{R}^{m \times n}$. Then as the number of convolutional filters per layer tends to infinity, the CNTK of $f_Z(\mathbf{W})$ is given by:
\begin{align}
    K(M_{ij}, M_{i'j'}) = [\Theta^{(d)}(Z, Z)]_{i,j,i',j'},
\end{align}
where $M_{ij}, M_{i'j'} \in \mathbb{R}^{m \times n}$ denote indicator matrices. 
\end{prop*}

\begin{proof}
The proof follows almost immediately from the derivation of the CNTK for classification provided in \cite{CNTKArora}.  Namely, let $g(M) = M^T f_Z(\mathbf{W})$ for $M \in \mathbb{R}^{m \times n}$. Then, we have that: 
\begin{align*}
    \frac{\partial g(M)}{\partial \mathbf{W}_{\alpha, \beta}} = \sum_{i=1}^{m} \sum_{j=1}^{n} M_{i,j} \frac{\partial f_Z(\mathbf{W})_{i,j}}{\partial \mathbf{W}_{\alpha, \beta}}.
\end{align*}
Thus, the kernel at the indicator matrices $M_{ij}, M_{i'j'}$ is given by: 
\begin{align*}
    K(M_{ij}, M_{i'j'}) = \frac{\partial g(M_{ij})}{\partial \mathbf{W}_{\alpha, \beta}} \frac{\partial g(M_{i'j'})}{\partial \mathbf{W}_{\alpha, \beta}} = \frac{\partial f_Z(\mathbf{W})_{i,j}}{\partial \mathbf{W}_{\alpha, \beta}} \frac{\partial f_Z(\mathbf{W})_{i', j'}}{\partial \mathbf{W}_{\alpha, \beta}} = [\Theta(Z, Z)]_{i,j, i', j'},
\end{align*}
which completes the proof.
\end{proof}

Below we additionally present an explicit derivation for the 1 hidden layer case for ReLU networks.  This derivation will be useful in understanding the connection between the CNTK for matrix completion with semi-supervised learning from coordinate embeddings (i.e. Theorem 2 of the main text).  

\begin{prop*}[1 Hidden Layer Convolutional Network]
Let $Z \in \mathbb{R}^{c \times m \times n}$ denote the \prior.  Let $*$ denote the neural network convolution operator and let $f_Z(\mathbf{W}) = A * \frac{\sqrt{c}}{q \sqrt{k}}\phi(B * Z)  $ denote a 1 hidden layer convolutional network where $B$ has $k$ filters of size $q \times q \times c$ with circular padding, $A$ has $1$ filter of size $q \times q \times k$ with circular padding for odd $q$, $\phi$ is a homogeneous activation function of degree 1, and $c^2 = \frac{1}{\mathbb{E}_{u \sim \mathcal{N}(0, 1)} [\phi(u)^2]} $.  Let $K^{(0)}, \tilde{K}^{(0)}, \Sigma^{(0)} \in \mathbb{R}^{m \times n \times m \times n}$ such that: 
\begin{align*}
    \Sigma^{(0)}(i, j, i', j') = K^{(0)}(i, j, i', j') &= \sum_{\ell=1}^{c} \sum_{ -\frac{q+1}{2} \leq m, n \leq \frac{q+1}{2} } Z_{\ell, i+m, j+n} Z_{\ell, i' + m, j' + n}.
\end{align*}
If $M_{ij}$ and $M_{i'j'}$ are indicator matrices, then as $k \to \infty$, the CNTK for $f_Z(\mathbf{W})$ is given by: 
\begin{align*}
    K(M_{ij}, M_{i'j'}) &= \frac{1}{q^2}\sum_{-\frac{q+1}{2} \leq a, b \leq \frac{q+1}{2}} \Sigma^{(1)}(i + a, j + b, i' + a, j' + b)  \\
    &~~~~~~~~~~+ \dot{\Sigma}^{(1)}(i + a, j + b, i' + a, j' + b) K^{(0)}(i + a, j + b, i' + a, j' + b),
\end{align*}
where 
\begin{align*}
    \Sigma^{(1)}(i, j, i', j') &= \sqrt{\Sigma^{(0)}(i, j, i, j) \Sigma^{(0)}(i', j', i', j')} \hat{\phi}\left( \frac{\Sigma^{(0)}(i, j, i', j')}{\sqrt{\Sigma^{(0)}(i, j, i, j) \Sigma^{(0)}(i', j', i', j')}} \right), \\
     \dot{\Sigma}^{(1)}(i, j, i', j') &= \frac{d\hat{\phi}}{d\xi}\left( \frac{\Sigma^{(0)}(i, j, i', j')}{\sqrt{\Sigma^{(0)}(i, j, i, j) \Sigma^{(0)}(i', j', i', j')}} \right).
\end{align*}
\end{prop*}

\begin{proof}
We provide the proof for the case of $1$ input channel ($c = 1$) below.  The proof follows analogously for the case of multiple input channels.  Let $g(M) = tr(M^T f_Z(\mathbf{W}))$.  Let $Y^{(\ell)}$ denote channel $\ell$ of $\frac{2}{\sqrt{k}}\phi (B * Z)$ and let $H = A * \frac{\sqrt{2}}{q\sqrt{k}}\phi (B* Z)$.  We thus have that 
\begin{align*}
    Y_{ij}^{(\ell)} &= \frac{\sqrt{c}}{q\sqrt{k}} \phi\left(  \sum_{-\frac{q+1}{2} \leq a, b \leq \frac{q+1}{2}} Z_{i + a, j+b} B_{a, b} ^{(\ell)}\right), \\
    H_{ij} &= \sum_{\ell=1}^{k} \sum_{-\frac{q+1}{2} \leq a, b \leq \frac{q+1}{2}} Y_{i + a, j+ b}^{(\ell)} A_{a, b}^{(\ell)}, \\
    g(M) &=  \sum_{1 \leq i, j \leq d} M_{ij} H_{ij}.
\end{align*}
Now we compute the partial derivatives of $f$ with respect to the parameters $A_{a,b}^{(\ell)}$ and $B_{m, n}^{(\ell)}$: 
\begin{align*}
    \frac{\partial g(M_{ij})}{\partial A_{a,b}^{(\ell)}} &=  Y_{i + a, j+ b}^{(\ell)},  \\
    \frac{\partial g(M_{ij})}{\partial B_{m, n}^{(\ell)}} &=   \sum_{-\frac{q+1}{2} \leq a, b \leq \frac{q+1}{2}}  A_{a, b}^{(\ell)} \frac{\sqrt{c}}{q\sqrt{k}} \phi'\left(  \sum_{-\frac{q+1}{2} \leq a', b' \leq \frac{q+1}{2}} Z_{i + a + a', j+b + b'} B_{a', b'} ^{(\ell)} \right)  Z_{i + a + m, j + b + n}.
\end{align*}

As $k \to \infty$, the CNTK converges in probability to:
\begin{align}
    \label{eq: CNTK 1 hidden layer ReLU}
        K(M_{ij}, M_{i'j'}) = \mathbb{E}_{A_{a, b}^{(\ell)}, B_{m,n}^{(\ell)} \sim \mathcal{N}(0, 1)}\left[ \sum_{\ell=1}^{k} \sum_{a, b} \frac{\partial g(M_{ij})}{\partial A_{a, b}^{(\ell)}}\frac{\partial g(M_{i'j'})}{\partial A_{a, b}^{(\ell)}} + \sum_{\ell=1}^{k} \sum_{m, n} \frac{\partial g(M_{ij})}{\partial B_{m, n}^{(\ell)}} \frac{\partial g(M_{i'j'})}{\partial B_{m, n}^{(\ell)}} \right].
\end{align}
This expression can be simplified as follows: 
\begin{align*}
    K(M_{ij}, M_{i'j'}) &=  \sum_{-\frac{q+1}{2} \leq a, b \leq \frac{q+1}{2}} \Sigma^{(1)}(i + a, j + b, i' + a, j' + b) \\
    &~~~~~~~~~~ + \dot{\Sigma}^{(1)}(i + a, j + b, i' + a, j' + b) K^{(0)}(i + a, j + b, i' + a, j' + b),
\end{align*} 
where we have: 
\begin{align*}
    \Sigma^{(1)} &= \frac{c}{q^2}\mathbb{E}_{B_{a', b'}^{(\ell)}} \left[  \sum_{a, b} \phi\left(  \sum_{-\frac{q+1}{2} \leq a', b'\leq \frac{q+1}{2}} Z_{i + a + a', j+b + b'} B_{a', b'} ^{(\ell)}\right) \phi\left(  \sum_{-\frac{q+1}{2} \leq a', b'\leq \frac{q+1}{2}} Z_{i' + a + a', j' +b + b'} B_{a', b'} ^{(\ell)}\right) \right]\\
    \dot{\Sigma}^{(1)} &= \frac{c}{q^2}\mathbb{E}_{B_{a', b'}^{(\ell)}}  \left[\sum_{a,b} \phi'\left(  \sum_{-\frac{q+1}{2} \leq a', b' \leq \frac{q+1}{2}} Z_{i + a + a', j+b + b'} B_{a', b'} ^{(\ell)} \right)  \phi'\left(  \sum_{-\frac{q+1}{2} \leq a', b' \leq \frac{q+1}{2}} Z_{i' + a + a', j'+b + b'} B_{a', b'} ^{(\ell)} \right) \right]    
\end{align*}
Lastly, we reduce the above expressions by substituting in the values for $\Sigma^{(0)}$ from the statement of the proposition.  Namely, let
\begin{align*}
    u &= \sum_{a, b} \phi\left(  \sum_{-\frac{q+1}{2} \leq a', b'\leq \frac{q+1}{2}} Z_{i + a + a', j+b + b'} B_{a', b'} ^{(\ell)}\right),  \\
    v &=  \sum_{a, b} \phi\left(  \sum_{-\frac{q+1}{2} \leq a', b'\leq \frac{q+1}{2}} Z_{i' + a + a', j' +b + b'} B_{a', b'} ^{(\ell)}\right).
\end{align*}
Then, the above expressions for $\Sigma^{(1)}, \dot{\Sigma^{(1)}}$ simplify to: 
\begin{align*}
    \Sigma^{(1)} &= \frac{c}{q^2}\mathbb{E}_{B_{a', b'}^{(\ell)}} \left[ \phi(u) \phi(v) \right], \\
    \dot{\Sigma}^{(1)} &= \frac{c}{q^2}\mathbb{E}_{B_{a', b'}^{(\ell)}} \left[ \phi'(u) \phi'(v) \right].    
\end{align*}
Hence, we can use the formula for the dual activation of the ReLU to conclude that: 
\begin{align*}
    \Sigma^{(1)}(i, j, i', j') &= \frac{1}{q^2}\sqrt{\Sigma^{(0)}(i, j, i, j) \Sigma^{(0)}(i', j', i', j')} \check{\phi}\left( \frac{\Sigma^{(0)}(i, j, i', j')}{\sqrt{\Sigma^{(0)}(i, j, i, j) \Sigma^{(0)}(i', j', i', j')}} \right), \\
     \dot{\Sigma}^{(1)}(i, j, i', j') &= \frac{1}{q^2}\frac{d\check{\phi}}{d\xi}\left( \frac{\Sigma^{(0)}(i, j, i', j')}{\sqrt{\Sigma^{(0)}(i, j, i, j) \Sigma^{(0)}(i', j', i', j')}} \right).
\end{align*}
Lastly, we complete the proof by substituting these expressions for $\Sigma^{(1)}, \dot{\Sigma}^{(1)}$ into the expression for $K(M_{ij}, M_{i'j'})$ above.
\end{proof}

\textcolor{black}{As implied by Proposition 1 above, the CNTK is a functional of \emph{pairs of coordinates} of images, while the usual CNTK for classification operates on pairs of images~\cite{CNTKArora}. To be more specific, consider the setting where the target matrix $Y$ is in $\mathbb{R}^{m \times n}$. Then, the CNTK for matrix completion that we compute lies in $\mathbb{R}^{mn \times mn}$.  On the other hand, when given $n$ images for classification, the CNTK computed in \cite{CNTKArora} lies in $\mathbb{R}^{n \times n}$ and does not depend on the image size.}

\section{Equivalence with Semi-Supervised Learning for the CNTK}
\label{appendix: K}
In the following, we present the statement and proof of Theorem 2 from the main text with the precise form for $\tilde{\psi}$.   

\begin{theorem*}
Consider a convolutional network, $f_Z(\mathbf{W})$, with $d$ hidden layers with homogeneous activation of degree 1 and in which all filters have size $q$ and circular padding.  Let $Z \in \mathbb{R}^{c \times m \times n}$ satisfy:
\begin{align*}
\sum_{\ell=1}^{c} \sum_{ -\alpha \leq a, b \leq \alpha } Z_{\ell, i+a, j+b} Z_{\ell, i' + a, j' + b} = \psi ( | i - i'|, | j - j'|)
\end{align*}
for some $\psi: \mathbb{R}^2 \to \mathbb{R}$ with maximum at $(0,0)$ and $\alpha = \frac{q-1}{2}$ (odd $q$). Then as the number of convolutional filters per layer goes to infinity, the CNTK is given by: 
\begin{align*}
    K_d(M_{ij}, M_{i'j'}) &= \tilde{\psi}(|i-i'|, |j-j'|) \\
    &= \check{\phi}^{(d)}\left(\frac{\psi ( | i - i'|, | j - j'|)}{\psi(0,0)}  \right) \psi(0,0) + K_{d-1}(M_{ij}, M_{i'j'}) \frac{d \check{\phi}}{d \xi} \left( \check{\phi}^{(d-1)} \left( \frac{\psi ( | i - i'|, | j - j'|)}{\psi(0,0)} \right) \right),
\end{align*}
where $\check{\phi}$ is the dual activation of $\phi$, $\check{\phi}^{(d)}(\xi) = \check{\phi} ( \check{\phi}^{(d-1)}(\xi))$ with $\check{\phi}^{(0)}(\xi)  = \xi$, and $K_{0}(M_{ij}, M_{i'j'}) = \psi(|i - i'|, |j - j'|)$. 
\end{theorem*}


\begin{proof}
We prove this by induction on the number of hidden layers $d$.  We begin with the base case for $d=1$: The proof for this case follows from the proof of the Proposition for 1 hidden convolutional networks in Appendix \ref{appendix: J}.  Namely, we have: 
\begin{align*}
      K(M_{ij}, M_{i'j'}) &= \sum_{-\frac{q-1}{2} \leq a, b \leq \frac{q-1}{2}} \Sigma^{(1)}(i + a, j + b, i' + a, j' + b)  \\
    &~~~~~~~~~~+ \dot{\Sigma}^{(1)}(i + a, j + b, i' + a, j' + b) K^{(0)}(i + a, j + b, i' + a, j' + b),
\end{align*}
where 
\begin{align*}
    \Sigma^{(1)}(i, j, i', j') &= \frac{1}{q^2} \sqrt{\Sigma^{(0)}(i, j, i, j) \Sigma^{(0)}(i', j', i', j')} \check{\phi}\left( \frac{\Sigma^{(0)}(i, j, i', j')}{\sqrt{\Sigma^{(0)}(i, j, i, j) \Sigma^{(0)}(i', j', i', j')}} \right), \\
     \dot{\Sigma}^{(1)}(i, j, i', j') &= \frac{1}{q^2} \frac{d\check{\phi}}{d\xi}\left( \frac{\Sigma^{(0)}(i, j, i', j')}{\sqrt{\Sigma^{(0)}(i, j, i, j) \Sigma^{(0)}(i', j', i', j')}} \right).
\end{align*}
Now since $\Sigma^{(0)}(i, j, i', j') = \psi(|i - i'|, |j-j'|)$, we conclude that 
\begin{align*}
    \Sigma^{(1)}(i, j, i', j') &= \frac{1}{q^2}  \psi(0, 0) \check{\phi}\left(\frac{\psi(|i - i'|, |j - j'|)}{\psi(0,0)}\right), \\
    \dot{\Sigma}^{(1)} &= \frac{1}{q^2}  \frac{d\check{\phi}}{d\xi} \left(\frac{\psi(|i - i'|, |j - j'|)}{\psi(0,0)} \right).
\end{align*}
Substituting the above into the expression for $K(M_{ij}, M_{i'j'})$, we obtain
\begin{align*}
    K(M_{ij}, M_{i'j'}) &= \frac{1}{q^2} \sum_{-\frac{q+1}{2} \leq a, b \leq \frac{q+1}{2}} \psi(0, 0) \check{\phi}\left(\frac{\psi(|i - i'|, |j - j'|)}{\psi(0,0)}\right) \\
    &~~~~~~~~~~ + \psi(|i - i'|, |j - j'|) \frac{d\check{\phi}}{d\xi} \left(\frac{\psi(|i - i'|, |j - j'|)}{\psi(0,0)} \right)
\end{align*}
Note that the summand no longer depends on $a, b$, and thus we conclude that
\begin{align*}
    K(M_{ij}, M_{i'j'}) = \psi(0,0) \check{\phi}\left(\frac{\psi(|i - i'|, |j-j'|)}{\psi(0,0)} \right) + \frac{d\check{\phi}}{d\xi}\left(\frac{\psi(|i - i'|, |j-j'|)}{\psi(0,0)} \right) \psi(|i - i'|, |j-j'|), 
\end{align*}
which completes the base case.

For the inductive step, we assume that the following holds for depth $d-1$: 
\begin{align*}
    \Sigma^{(d-1)}(M_{ij}, M_{i'j'}) &= \frac{1}{q^2} \check{\phi}^{(d-1)}\left(\frac{\psi ( | i - i'|, | j - j'|)}{\psi(0,0)}  \right)  \psi(0,0),\\
    \dot{\Sigma}^{(d-1)}(M_{ij}, M_{i'j'}) &=  \frac{1}{q^2} \frac{d \check{\phi}}{d \xi} \left( \check{\phi}^{(d-2)} \left( \frac{\psi ( | i - i'|, | j - j'|)}{\psi(0,0)} \right) \right),\\
    K_{d-1}(M_{ij}, M_{i'j'}) &= q^2\Sigma^{(d-1)}(M_{ij}, M_{i'j'}) + q^2 K_{d-2}(M_{ij}, M_{i'j'}) \dot{\Sigma}^{(d-1)}(M_{ij}, M_{i'j'}),
\end{align*}
and assume that $K_{d-1}(M_{ij}, M_{i'j'}) = K_{d-1}(M_{i+a, j+b}, M_{i'+a, j'+b})$ for any $a, b \in \mathbb{Z}$ satisfying $i+a, i'+a \in [m]$ and $j+b, j'+b \in [n]$ (i.e. assume that $K_{d-1}$ is \textit{shift invariant}).  Now, let $S^{(d-1)}(M_{ij}, M_{i'j'})$ be defined as follows: 
\begin{align*}
    S^{(d-1)}(M_{ij}, M_{i'j'}) = \sum_{-\frac{q-1}{2} \leq a, b, \leq \frac{q-1}{2}} \Sigma^{(d-1)}(M_{i+a, j+b}, M_{i'+a, j'+b}) = \check{\phi}^{(d-1)}\left(\frac{\psi ( | i - i'|, | j - j'|)}{\psi(0,0)}  \right)  \psi(0,0).
\end{align*}
Then, by the derivation of the CNTK in \cite{CNTKArora}, we obtain
\begin{align*}
    \Sigma^{(d)}(M_{ij}, M_{i'j'}) &= \frac{1}{q^2} \check{\phi} \left( \frac{S^{(d-1)}(M_{ij}, M_{i'j'})}{\sqrt{S^{(d-1)}(M_{ij}, M_{ij}) S^{(d-1)}(M_{i'j'}, M_{i'j'})} } \right) \sqrt{S^{(d-1)}(M_{ij}, M_{ij}) S^{(d-1)}(M_{i'j'}, M_{i'j'})}\\
    &=  \frac{1}{q^2} \check{\phi} \left(\check{\phi}^{(d-1)}\left(\frac{\psi ( | i - i'|, | j - j'|)}{\psi(0,0)}  \right) \right) \psi(0, 0),
\end{align*}
where the last equality follows from the fact that $\check{\phi}(1) = 1$.  Following an analogous derivation for $\dot{\Sigma}^{(d-1)}$, we obtain that 
\begin{align*}
      \dot{\Sigma}^{(d)}(M_{ij}, M_{i'j'}) &=\frac{1}{q^2} \frac{d \check{\phi}}{d \xi} \left(\check{\phi}^{(d-1)}\left(\frac{\psi ( | i - i'|, | j - j'|)}{\psi(0,0)}  \right) \right). 
\end{align*}
Hence, the CNTK $K_{d}(M_{ij}, M_{i'j'})$ is given by: 
\begin{align*}
    K_d(M_{ij}, M_{i'j'}) &=  \sum_{-\frac{q-1}{2} \leq a, b, \leq \frac{q-1}{2}} \Sigma^{(d)}(M_{i+a, j+b}, M_{i'+a, j'+b}) \\
    &~~~~~~~~~~ +  K_{d-1}(M_{i+a,j+b}, M_{i'+a,j'+b}) \dot{\Sigma}^{(d)}(M_{i+a,j+b}, M_{i'+a,j'+b}) \\
    &= q^2\Sigma^{(d)}(M_{ij}, M_{i'j'}) + q^2 K_{d-1}(M_{ij}, M_{i'j'}) \dot{\Sigma}^{(d)}(M_{ij}, M_{i'j'}),
\end{align*}
where the last line follows from the shift invariance of $K_{d-1}$.  Lastly, we have that $K_d$ is shift invariant since all of the terms $\Sigma^{(d)}, \dot{\Sigma}^{(d)}$ and $K_{d-1}$ are shift invariant.  Hence, the induction is complete and the theorem follows.  
\end{proof}


\section{Derivation of the CNTK for Matrix Completion with Modern Architectures}
\label{appendix: L}
Below, we derive the CNTK for networks with fixed linear transformations.  We note a similar formula appears in the Appendix of \cite{DenoisingNTK}, but does not appear to be derived for the cases of nearest neighbor upsampling, nearest neighbor downsampling, and bilinear upsampling. 


\begin{prop*}
Let $g(M) = tr(M^T A f_Z(\mathbf{W}))$ denote a neural network where $A \in \mathbb{R}^{mn \times pq}$ is a fixed (i.e.~non-trainable) linear transformation and $f_Z(\mathbf{W})$ is a convolutional network under the NTK parameterization\footnote{We assume $A$ operates on the vectorized version of $f_Z(\mathbf{W})$ and then the output is reshaped to size $m \times n$ before multiplication by $M^T$.}.  Then the CNTK, $K \in \mathbb{R}^{mn \times mn}$, for $g$  is given by: 
\begin{align*}
    K & = A K_f A^T  \implies
    K(M_{ij}, M_{i'j'}) =  \sum_{a=1}^{pq} \sum_{b=1}^{pq} A_{v(i,j), a} A_{v(i',j'), b} K_f(M_{v_1^{-1}(a), v_2^{-1}(a)}, M_{v_1^{-1}(b), v_2^{-1}(b)}), 
\end{align*}
where $v: \mathbb{R}^{2} \to \mathbb{R}$ is the bijective map from a coordinate $(i,j)$ in a matrix $B$ to its position in the vectorized version of $B$ and $K_f \in \mathbb{R}^{pq \times pq}$ is the CNTK for $f$.
\end{prop*}

\begin{proof}
Let  $w_{p}$ denote a weight in $f$ and let $\textbf{w}$ denote the vector of all weights in $f$. We thus have that 
\begin{align*}
    & \frac{\partial g(M)}{\partial w_{p}} = \sum_{m, n} M_{m, n} \frac{\partial{[A f_Z(\mathbf{W})]_{v(m, n)}}}{\partial w_{p}} = \sum_{m, n} M_{m, n}  \sum_{\ell=1}^{pq} A_{v(m,n), \ell} \frac{\partial{f_Z(\mathbf{W})_{\ell}}}{\partial w_{p}} \\
    & \implies  K( M_{ij}, M_{i'j'}) = \left\langle \sum_{a=1}^{pq}  A_{v(i,j), a} \frac{\partial f_Z(\mathbf{W})_{a}}{\partial \textbf{w}}, \sum_{b=1}^{pq}  A_{v(i',j'), b} \frac{\partial f_Z(\mathbf{W})_{b}}{\partial \textbf{w}} \right\rangle  = A K_g A^T,
\end{align*}
which completes the proof.
\end{proof}

While the Proposition above generally implies that a a linear transformation requires evaluating a quadratic form when computing the CNTK, the matrix $A$ corresponding to layers used in practice is typically extremely sparse.  Hence, the required computation is simplified drastically, as is demonstrated by the following corollaries (the proofs follow directly from the proposition above).  


\begin{corollary*}[Downsampling through Strided Convolution] 
\label{corollary: Downsampling CNTK}
Let $\Sigma^{(\ell)}, \dot{\Sigma}^{(\ell)}, K^{(\ell)} \in \mathbb{R}^{d \times d \times d \times d}$ correspond to the tensors used in the CNTK for a depth $\ell$ convolutional network.  Then, using downsampling with a stride of $2$ at step $\ell+1$ maps the tensors to $\Sigma^{(\ell + 1)}, \dot{\Sigma}^{(\ell + 1)}, K^{\ell +1} \in \mathbb{R}^{\frac{d}{2} \times \frac{d}{2} \times \frac{d}{2} \times \frac{d}{2}}$ as follows: $\forall ~ i, j, i', j' \equiv 0 ~ (\hspace{-2mm}\mod 2)$,
\begin{align*}
    \Sigma^{(\ell + 1)}\left(\frac{i}{2}, \frac{j}{2}, \frac{i'}{2}, \frac{j'}{2}\right) &=  \Sigma^{(\ell)}(i, j, i', j'),\\
    \dot{\Sigma}^{(\ell + 1)}\left(\frac{i}{2}, \frac{j}{2}, \frac{i'}{2}, \frac{j'}{2}\right) &=  \dot{\Sigma}^{(\ell)}(i, j, i', j'), \\
    K^{(\ell + 1)}\left(\frac{i}{2}, \frac{j}{2}, \frac{i'}{2}, \frac{j'}{2}\right) &=  K^{(\ell)}(i, j, i', j').  
\end{align*}
\end{corollary*}

\begin{corollary*}[Nearest Neighbor Upsampling] 
\label{corollary: Upsampling CNTK}
Let $\Sigma^{(\ell)}, \dot{\Sigma}^{(\ell)}, K^{(\ell)} \in \mathbb{R}^{\frac{d}{2} \times \frac{d}{2} \times \frac{d}{2} \times \frac{d}{2}}$ correspond to the tensors used in the CNTK for a depth $\ell$ convolutional network.  Then, using nearest neighbor upsampling with a scale factor of $2$ at step $\ell+1$ transforms the tensors to $\Sigma^{(\ell+1)}, \dot{\Sigma}^{(\ell+1)}, K^{(\ell +1)} \in \mathbb{R}^{d \times d \times d \times d}$ as follows: 
\begin{align*}
    \Sigma^{(\ell+1)}\left(i, j, i', j' \right) &=  \Sigma^{(\ell)}\left(\floor*{\frac{i}{2}}, \floor*{\frac{j}{2}}, \floor*{\frac{i'}{2}}, \floor*{\frac{j'}{2}} \right),\\
    \dot{\Sigma}^{(\ell + 1)}\left(i, j, i', j' \right) &=  \dot{\Sigma}^{(\ell)}\left(\floor*{\frac{i}{2}}, \floor*{\frac{j}{2}}, \floor*{\frac{i'}{2}}, \floor*{\frac{j'}{2}} \right),\\
    K^{(\ell + 1)}\left(i, j, i', j' \right) &=  K^{(\ell)}\left(\floor*{\frac{i}{2}}, \floor*{\frac{j}{2}}, \floor*{\frac{i'}{2}}, \floor*{\frac{j'}{2}} \right).  
\end{align*}
\end{corollary*}

The computation for bilinear upsampling (Ch. 2.4 of \cite{DigitalImageProcessing}) is presented below. We primarily use the structure of the updates to $\Sigma, \dot{\Sigma}, K$ to efficiently compute the CNTK when the channels of $X$ are drawn i.i.d. from a stationary distribution.

When bilinearly upsampling (Ch. 2.4 of \cite{DigitalImageProcessing}) an image $A\in \mathbb{R}^{d \times d}$ to an image $\tilde{A} \in \mathbb{R}^{2d \times 2d}$, each coordinate of $\tilde{A}$ is a linear combination of four coordinates of $A$.  Namely for $\alpha = \frac{d-1}{2d-1}$, 
\begin{align*}
    \tilde{A}_{i,j} = \sum_{a, b \in \{0, 1\}} \lambda^{(i,j)}_{a, b} A_{\lfloor \alpha i \rfloor  + a, \lfloor \alpha j \rfloor + b},
\end{align*}
and $\lambda_{a,b}^{i,j}$ is selected as follows.  Let  $r = \lfloor \alpha i \rfloor, c = \lfloor \alpha j \rfloor$ and let:
\begin{align*}
    \ell_r &= \frac{r}{\alpha}, u_r = \frac{r+1}{\alpha},  \ell_c = \frac{c}{\alpha}, u_c = \frac{c+1}{\alpha}, \\
    X &= [u_r - r, r - \ell_r], Y = [u_c - c, c - \ell_c], C = \frac{1}{(u_r - \ell_r) (u_c - \ell_c)}.
\end{align*}
Then, $\lambda_{a, b}^{(i,j)} = C X_a Y_b$ for $a, b \in \{0, 1\}$.   The CNTK tensors are now transformed as follows. 

\begin{corollary}[Bilinear Upsampling] Let $\Sigma^{(\ell)}, \dot{\Sigma}^{(\ell)}, K^{(\ell)} \in \mathbb{R}^{\frac{d}{2} \times \frac{d}{2} \times \frac{d}{2} \times \frac{d}{2}}$ correspond to the tensors used in the CNTK for a depth $\ell$ convolutional network.  Then, using bilinear upsampling with a scale factor of $2$ at step $\ell+1$ transforms the tensors to $\Sigma^{(\ell + 1)}, \dot{\Sigma}^{(\ell + 1)}, K^{\ell +1} \in \mathbb{R}^{d \times d \times d \times d}$ as follows: 
\begin{align*}
    \Sigma^{(\ell + 1)}\left(i, j, i', j' \right) &=  \sum_{a, b \in \{0, 1\}} \sum_{a', b' \in \{0, 1\}} \lambda_{a,b}^{(i,j)} \lambda_{a',b'}^{(i', j')} \Sigma^{(\ell)}\left(\lfloor \alpha i \rfloor+ a, \lfloor \alpha j \rfloor + b, \lfloor \alpha i' \rfloor + a',  \lfloor \alpha j' \rfloor + b' \right),\\
    \dot{\Sigma}^{(\ell + 1)}\left(i, j, i', j' \right) &=  \sum_{a, b \in \{0, 1\}} \sum_{a', b' \in \{0, 1\}} \lambda_{a,b}^{(i,j)} \lambda_{a',b'}^{(i', j')} \dot{\Sigma}^{(\ell)}\left(\lfloor \alpha i \rfloor+ a, \lfloor \alpha j \rfloor + b, \lfloor \alpha i' \rfloor + a',  \lfloor \alpha j' \rfloor + b' \right),\\
    K^{(\ell + 1)}\left(i, j, i', j' \right) &=  \sum_{a, b \in \{0, 1\}} \sum_{a', b' \in \{0, 1\}} \lambda_{a,b}^{(i,j)} \lambda_{a',b'}^{(i', j')} K^{(\ell)}\left(\lfloor \alpha i \rfloor+ a, \lfloor \alpha j \rfloor + b, \lfloor \alpha i' \rfloor + a',  \lfloor \alpha j' \rfloor + b' \right).\\
\end{align*}
\end{corollary}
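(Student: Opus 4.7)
The plan is to derive this corollary as a direct specialization of the general Proposition stated at the start of Appendix \ref{appendix: L}, namely that $K = A K_f A^T$ whenever a fixed linear transformation $A$ is applied to the output of a convolutional network. The first step is to identify the matrix $A$ corresponding to bilinear upsampling: viewed as a map $\mathbb{R}^{d \times d} \to \mathbb{R}^{2d \times 2d}$, the given formula $\tilde{A}_{i,j} = \sum_{a,b \in \{0,1\}} \lambda^{(i,j)}_{a,b} A_{\lfloor \alpha i \rfloor + a, \lfloor \alpha j \rfloor + b}$ tells us that, after vectorization, each row $v(i,j)$ of $A$ has exactly four nonzero entries, located at columns $v(\lfloor \alpha i \rfloor + a, \lfloor \alpha j \rfloor + b)$ for $a,b \in \{0,1\}$, with values $\lambda^{(i,j)}_{a,b}$.

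Next, I would substitute this sparse structure into the identity $K(M_{ij}, M_{i'j'}) = \sum_{c,d} A_{v(i,j),c} A_{v(i',j'),d} K_f(M_{v_1^{-1}(c), v_2^{-1}(c)}, M_{v_1^{-1}(d), v_2^{-1}(d)})$ from the Proposition. Because only the four entries $c = v(\lfloor \alpha i \rfloor + a, \lfloor \alpha j \rfloor + b)$ and $d = v(\lfloor \alpha i' \rfloor + a', \lfloor \alpha j' \rfloor + b')$ are nonzero, the sum collapses to a finite sum over $a, b, a', b' \in \{0,1\}$ with coefficients $\lambda^{(i,j)}_{a,b} \lambda^{(i',j')}_{a',b'}$, yielding precisely the claimed formula for $K^{(\ell+1)}$.

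Finally, I need to justify that the same formula holds verbatim for $\Sigma^{(\ell+1)}$ and $\dot{\Sigma}^{(\ell+1)}$, not just for $K^{(\ell+1)}$. This follows from the fact that the tensors $\Sigma^{(\ell)}$ and $\dot{\Sigma}^{(\ell)}$ are themselves computed as inner products of the pre- and post-activation features at layer $\ell$; a fixed linear layer pre-composed into these features distributes through the bilinear forms defining $\Sigma$ and $\dot{\Sigma}$ exactly as it does through $K$. Concretely, one runs the same proof as the general Proposition for $A$-transformed features and applies it to the $\Sigma, \dot{\Sigma}$ recursions, obtaining the quadratic-form update with the same coefficients $\lambda^{(i,j)}_{a,b} \lambda^{(i',j')}_{a',b'}$.

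I do not expect any genuine obstacle here, since the result is essentially a bookkeeping exercise: once the general Proposition is available, bilinear upsampling is just a particular sparse choice of $A$. The only mildly delicate point is being explicit that $\Sigma$ and $\dot{\Sigma}$ transform in the same way as $K$ under a fixed linear layer, which I would state as a brief lemma or an explicit remark mirroring the proof of the Proposition rather than reprove from scratch. No calculation of the specific values of $\lambda^{(i,j)}_{a,b}$ is needed, since the corollary leaves these as given.
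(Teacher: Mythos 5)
Your proposal is correct and follows essentially the same route as the paper, which presents this corollary (like the downsampling and nearest-neighbor ones) as a direct specialization of the preceding Proposition on fixed linear transformations to the sparse matrix $A$ with four nonzero entries $\lambda^{(i,j)}_{a,b}$ per row. Your explicit remark that $\Sigma^{(\ell)}$ and $\dot{\Sigma}^{(\ell)}$ transform by the same quadratic form as $K^{(\ell)}$ is a point the paper leaves implicit, and including it as a brief lemma is a reasonable tightening rather than a departure.
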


\section{Efficient Computation of the CNTK for High Resolution Images}
\label{appendix: M}

Computing and storing the CNTK exactly for high resolution images is computationally prohibitive when using a naive approach.  In particular, \cite{BayesianDeepImagePrior} notes that the kernel $K$ for a $500 \times 500$ ($K \in \mathbb{R}^{500 \times 500 \times 500 \times 500}$) resolution image requires roughly 233GB of memory, which is infeasible on common hardware.  In order to overcome these computational limitations, \cite{DenoisingNTK} uses the Nyström method \cite{Nystrom} to approximate the kernel.  In this section, we will demonstrate that we can compute the exact CNTK in a memory and run-time efficient manner for any convolutional neural network with circular padding, strided convolution, and nearest neighbor upsampling layers by using a \prior $Z$ that has infinitely many channels.

Our key insight is that once architecture is fixed, the the CNTK for low resolution images can be expanded to that for high resolution images.  In particular, when the convolutional architecture can be applied to both images of resolution $d_1$ and $d_2$  with $d_2 > d_1$, we can expand the kernel for resolution $d_1$,  $K_{d_1} \in \mathbb{R}^{d_1 \times d_1 \times d_1 \times d_1}$, to a tensor of size $\mathbb{R}^{d_1 \times d_1 \times d_2 \times d_2}$, which can be indexed to match the entries of the kernel for resolution $d_2$, $K_{d_2} \in  \mathbb{R}^{d_2 \times d_2 \times d_2 \times d_2}$.



In order to expand the kernel for low resolution images to the one for high resolution images, we need only pad and permute the rows and columns of the low resolution matrix.  We define the required operations formally below (using zero indexing for our matrices).

\begin{definition} [Row and Column Rotation]
Let $\Pi_{i,j}: \mathbb{R}^{d \times d} \to \mathbb{R}^{d \times d}$ such that $\Pi_{i,j} (A) = P_{\pi_i} A P_{\pi_j}$  where $P_{\pi_\ell}$ is a permutation matrix with permutation $\pi_{\ell}(i) = (i + \ell)\hspace{-1mm} \mod d $.
\end{definition}

\begin{definition}[Minimum Padding]
Let $M: \mathbb{R}^{d_1 \times d_1} \to \mathbb{R}^{d_2 \times d_2}$ with $d_2 \geq d_1$ such that $M(A) = \tilde{A}$, where 
\begin{align*}
\tilde{A}_{i,j} &= \begin{cases} 
A_{i,j} & i < d_1, j < d_1\\
\min_{a, b \in [d_1]} A_{a, b} & \text{otherwise}
\end{cases}. 
\end{align*}
\end{definition}

\begin{example}
The operator $\Pi_{i,j}$ rotates the rows of $A$ down by $i$ and rotates the columns of $A$ right by $j$ as follows: 
\begin{align*}
    A = \begin{bmatrix} 
    A_{1,1} & A_{1,2} & A_{1,3} \\
    A_{2, 1} & A_{2,2} & A_{2,3} \\
    A_{3, 1} & A_{3, 2}& A_{3,3}
    \end{bmatrix} \implies \Pi_{1, 2}(A) = \begin{bmatrix} 
    A_{3, 2} & A_{3, 3} & A_{3,1} \\
    A_{1, 2} & A_{1, 3} & A_{1,1} \\
    A_{2, 2} & A_{2, 3} & A_{2,1}
    \end{bmatrix}.
\end{align*}
Minimum padding $M: \mathbb{R}^{2 \times 2} \to \mathbb{R}^{4 \times 4}$ expands a matrix as follows:
\begin{align*}
    A = \begin{bmatrix}
    0.1 & 0.2 \\
    0.3 & 0.4 \\
    \end{bmatrix}   
    \implies M(A) = \begin{bmatrix}
    0.1 & 0.2 & 0.1 & 0.1  \\
    0.3 & 0.4 & 0.1 & 0.1 \\
    0.1 & 0.1 & 0.1 & 0.1  \\
    0.1 & 0.1 & 0.1 & 0.1     
    \end{bmatrix}.   
\end{align*}
\end{example}

\vspace{0.2cm}
The theorem below demonstrates how to construct the kernel for a high resolution image by expanding and indexing a low resolution kernel.  We assume that all strided convolutional layers have a stride size of $2$ in each direction and all upsampling layers have a scaling factor of $2$.  For the following theorem, we also write the kernel $K \in \mathbb{R}^{mn \times mn}$ as a 4 dimensional tensor $K \in \mathbb{R}^{m \times n \times m \times n}$, where $K(i, j, i', j'):= K(M_{ij}, M_{i'j'})$. 

\begin{theorem*}[CNTK Expansion]
\label{theorem: Kernel Expansion Trick}
Let $g$ denote a convolutional neural network with circular padding, $s$ downsampling with strided convolution layers and $s$ nearest neighbor upsampling layers used to inpaint images in $\mathbb{R}^{2^{s+1} \times 2^{s+1}}$.  Define the \prior $Z^{(\ell)} = \{Z_p^{(\ell)}\}_{p=1}^{\infty} \subset \mathbb{R}^{2^\ell \times 2^\ell}$ for $\ell \in \mathbb{Z}_+$ such that:
\begin{align}
\label{eq: special structure on prior X}
     \sum_{p=1}^{\infty} Z_{p, i, j}^{(\ell)} Z_{p, i', j'}^{(\ell)}  = \begin{cases} 
    C_1 & i = i' ~,~  j = j' \\
    C_2 & \text{otherwise}
    \end{cases}.
\end{align} 
Let $d_2 = 2^{p_2}$ such that $p_2 > s+1$.   For $\alpha = 2^{\beta}$, let $K_{\alpha}$ denote the CNTK for $g$ when used to inpaint images in $\mathbb{R}^{\alpha \times \alpha}$ with \prior $Z^{(\beta)}$. Let $p = 2^s$, $i' = i \hspace{-1mm} \mod p, j' = j \hspace{-1mm} \mod p$.  Then for $i, j \in [d_2]$, we compute $\tilde{K} \in \mathbb{R}^{p \times p \times d_2 \times d_2}$ as follows: 
\begin{align*}
\tilde{K}(i', j', :, :)  =  \Pi_{i' - p, j' - p} ( M ( \Pi_{p - i', p - j'} (K_{s+1}[i', j', :, : ])) ),
\end{align*}
and we have: 
\begin{align*}
    K_{d_2}(i, j, :, :) = \Pi_{i - i', j - j'} \tilde{K}(i', j', :, :).
\end{align*}
\end{theorem*}

\begin{proof}
To provide intuition for the general case, we first prove the result for $s=0$.  Using the Proposition from Appendix \ref{appendix: J} and the conditions on $Z^{(\ell)}$, we obtain 
\begin{align*}
    \Sigma^{(0)}(i,j, i', j') = K^{(0)}(i, j, i', j') = \begin{cases}
    q^2 C_1 ~~ \text{if $i = i', j' = j'$} \\
    q^2 C_2 ~~ \text{otherwise} 
    \end{cases}.
\end{align*}
Hence for any $\ell, \ell' \geq 1$ with $\ell' < \ell$, we conclude that $K_{\ell}(i, j, i', j') = K_{\ell'}(a, b, a', b')$ when $(i, j) \neq (i', j')$ and $(a, b) \neq (a', b')$,  and $K_{\ell}(i, j, i, j) = K_{\ell'}(a, b, a, b)$ for all $i, j  \in [2^\ell]$ and $a, b \in [2^{\ell'}]$.  Hence, by permuting rows, columns and minimum padding $K_{\ell'}$, we can recover the kernel for $K_{\ell}$.  Note that for $\ell = 0$, we do not ever record a kernel entry for the case where $(i, j) \neq (i', j')$ and so minimum padding would pad with the incorrect minimum value of $K_0(0, 0, 0, 0)$.  This is why we need to expand up from the kernel for images of dimension $2^{s+1}$ and not just from the kernel for images of dimension $2^{s}$.  

For $s > 0$, we rely on the nearest neighbor upsampling and downsampling corollaries from Appendix \ref{appendix: L} to understand which entries of $K_{\ell}(i, j, i', j')$ are equal to $K_{\ell'}(a, b, a', b')$.  Since $Z^{(\ell)}, Z^{(\ell')}$ have the same range $\{C_1, C_2\}$ of channel-wise products, it suffices to identify the elements of $K_{\ell'}$ that are equal.  These elements will then naturally be equal in $K_{\ell}$ after minimum padding.  

From \cite{BayesianDeepImagePrior}, we have that down-sampling through strided convolution preserves stationarity, and so after $t$ downsampling and convolutional layers, we again have that $K_{\ell}^{(t)}(i, j, i', j') = K_{\ell'}^{(t)}(a, b, a', b')$ when $(i, j) \neq (i', j')$ and $(a, b) \neq (a', b')$,  and $K_{\ell}^{(t)}(i, j, i, j) = K_{\ell'}^{(t)}(a, b, a, b)$ for all $i, j  \in [2^\ell]$ and $a, b \in [2^{\ell'}]$.  

In general, upsampling (including nearest neighbor upsampling) does not preserve stationarity, as is discussed in \cite{BayesianDeepImagePrior}.  However,  nearest neighbor upsampling preserves equality (up to permutation) between $K_{\ell}(i, j, :, :)$ and $K_{\ell}(i', j', :, :)$ provided that $i \equiv i' ~ (\hspace{-2mm}\mod 2^s)$ and $j \equiv j' ~ (\hspace{-2mm}\mod 2^s)$.  This follows immediately from analyzing the output after nearest neighbor upsampling in the original image space.  In the following, we provide an example. 
\begin{example} 
Consider the output of nearest neighbor upsampling a single channel $Y \in \mathbb{R}^{2 \times 2}$ to $\tilde{Y} \in \mathbb{R}^{4 \times 4}$ :
\begin{align*}
    Y = \begin{bmatrix} 
    Y_{0, 0} & Y_{0, 1} \\
    Y_{1, 0} & Y_{1, 1}    
    \end{bmatrix} \implies \tilde{Y} = \begin{bmatrix} 
    Y_{0, 0} & Y_{0, 0} & Y_{0, 1} & Y_{0, 1} \\
    Y_{0, 0} & Y_{0, 0} & Y_{0, 1} & Y_{0, 1} \\
    Y_{1, 0} & Y_{1, 0} & Y_{1, 1} & Y_{1, 1} \\
    Y_{1, 0} & Y_{1, 0} & Y_{1, 1} & Y_{1, 1}
    \end{bmatrix}.
\end{align*}
From the stationarity of $Z^{(\ell)}$ and since convolution and downsampling layers preserve stationarity, we have that the CNTK for the above output $K_2(i, j, :, :)$  equals (up to permutation) $K_2(i', j', :, :)$ whenever $i \equiv i' ~ (\hspace{-2mm}\mod 2)$ and $j \equiv j' ~ (\hspace{-2mm}\mod 2)$ since the corresponding entries in $\tilde{Y}$ have identical patterns of neighbors (i.e. a row or column permutation by $2^s$ does not affect the sums involved in the kernel computation).  
\end{example}

Thus, we conclude that the range of entries in $K_{\ell'}(a, b, :, :)$ and $K_{\ell}(i, j, :, :)$ are equal whenever both $i \equiv a ~ (\hspace{-2mm}\mod 2^s)$ and $b \equiv j ~ (\hspace{-2mm}\mod 2^s)$.  To complete the proof, we just permute and minimum pads the entries of $K_{\ell'}(a, b, :, :)$ to align the expanded matrix such that entry $K_{\ell'}(a, b, a, b)$ corresponds to $K_{\ell}(i, j, i, j)$ in the expanded matrix.  
\end{proof}

\textbf{Remarks.} Note that the expansion trick provided in the theorem above solely depends on (1) the number of downsampling and nearest neighbor upsampling layers; (2) the \prior $Z$ having special structure as described in \eqref{eq: special structure on prior X}; and (3) the convolutional layers using circular padding.  It importantly does \emph{not} depend on the number of layers, type of homogeneous activation function (i.e. ReLU or leakyReLU), or size of the convolutional filters used.  Hence our expansion technique can be used on a range of architectures, as we also demonstrate in Section 4 of the main text.  The permutations $\Pi_{p - i', p - j'},  \Pi_{i' - p, j' - p}$ used to compute $\tilde{K}$ are essentially used to ensure that we perform minimum padding appropriately for kernel values at the kernel's edges.  Lastly, when there are $s$ downsampling and upsampling layers, the smallest image size we can expand from is an image of size $2^{s+1} \times 2^{s+1}$.  We cannot use images of size $2^{s}$ since the corresponding kernel will not contain the same minimum value as that for images of size $2^{s+1}$.

\section{Experimental Details for Image Inpainting}
\label{appendix: N}
In the following, we  describe the hyperparameters used for training neural networks and solving kernel regression with the CNTK on the considered image inpainting and image reconstruction tasks.  

\subsection{Large Hole Inpainting}  

For all large hole inpainting experiments, we used the autoencoder architecture from \cite{BayesianDeepImagePrior} that has 6 downsampling and upsampling layers with no skip connections.  On all images other than the ``library'' image, we trained using the Adam optimizer \cite{Adam} for 1000 epochs with a learning rate $10^{-2}$.  For the ``library'' image, we trained using the Adam optimizer for 6000 epochs with a learning rate of $10^{-2}$.  We used a random seed of 15 for all libraries.  For implementing Adam with Langevin dynamics, we used the code and data from \cite{BayesianDeepImagePrior} directly.  We performed optimal early stopping for all neural networks, i.e. we chose the reconstruction that has the closest match in PSNR to the ground truth.  While impossible to perform in practice, optimal early stopping allows us to compare the CNTK with the best possible result from the neural network. 

For solving kernel regression with the CNTK, we trained using EigenPro \cite{EigenPro, EigenProGPU} for 10 epochs, i.e., we did not early stop for large hole inpainting tasks. We scaled all kernels by a factor of $0.5$ to ensure convergence with EigenPro.    

\subsection{Image Reconstruction} 

Below we list the architectures and training procedure for each image.  For the neural networks, we always trained for 6000 epochs using Adam with a learning rate of $10^{-3}$, which is the learning rate used in \cite{DeepImagePrior}.  All neural networks have $128$ convolutional filters per layer as is the case in \cite{DeepImagePrior}.  We trained the CNTK for the corresponding architecture with EigenPro for 50 epochs, unless otherwise specified.  The architectures used nearest neighbor upsampling, unless otherwise specified. We observed that training longer or, ideally, direct solving kernel regression with the CNTK for networks with nearest neighbor upsampling led to the best PSNR results for image reconstruction tasks.  This is consistent with \cite{DeepImagePrior} in which networks for image reconstruction are trained twice as long as those for large hole inpainting.  A direct solve was only computationally feasible on $256 \times 256$ resolution images.

\begin{itemize}
    \item ``Barbara'': We use a network with 2 downsampling and upsampling layers.  
    \item ``Boat'': We use a network with 6 downsampling and upsampling layers.  We train the CNTK for 100 epochs.
    \item ``Camera Man'': We use a network with 6 downsampling and upsampling layers.  We train the CNTK for 100 epochs.
    \item ``Couple'': We use a network with 6 downsampling and upsampling layers.  We train the CNTK for 100 epochs.
    \item ``Finger'': We use a network with 3 downsampling and upsampling layers.  We train the CNTK for 100 epochs.
    \item ``Hill'': We use a network with 6 downsampling and upsampling layers. 
    \item ``House'': We use a network with 6 downsampling and upsampling layers.  We solve kernel regression exactly using the numpy solve function.  
    \item ``Lena'': We use a network with 6 downsampling and upsampling layers.
    \item ``Man'': We use a network with 6 downsampling and upsampling layers.
    \item ``Montage'': We use a network with 6 downsampling and upsampling layers.  We solve kernel regression exactly using the numpy solve method.  
    \item ``Peppers'': We use a network with 5 downsampling and upsampling layers with bilinear upsampling.  We solve kernel regression exactly using the numpy solve method, but add diagonal regularization from \cite{FiniteVsInfiniteNeuralNetworks}.  In particular, for kernel $K \in \mathbb{R}^{p \times p}$, we add $\frac{4 \cdot 10^{-5}}{p}  tr(K) I_{p \times p}$ to the kernel before using the numpy solve function.
\end{itemize}

\begin{figure}
    \centering
    \includegraphics[height=.8\textheight]{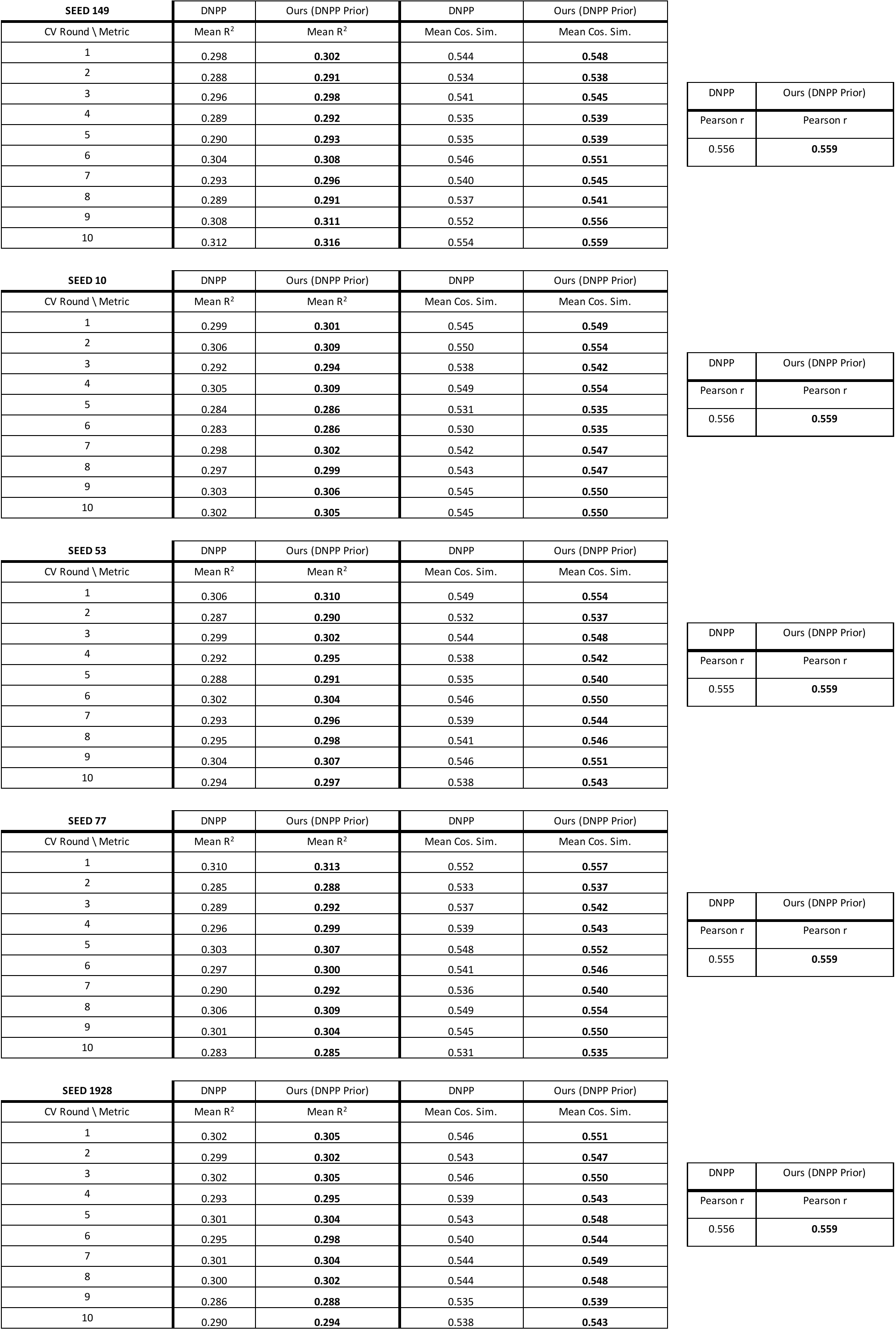}
    \caption{Comparison between DNPP and using the output of DNPP as a \prior.  Using our method with the output of DNPP as a \prior leads to an improvement in all metrics across every round of 10-fold cross validation in 5 seeds.}
    \label{fig: SI DNPP vs. DNPP Prior}
\end{figure}

\begin{figure}
    \centering
    \includegraphics[height=.8\textheight]{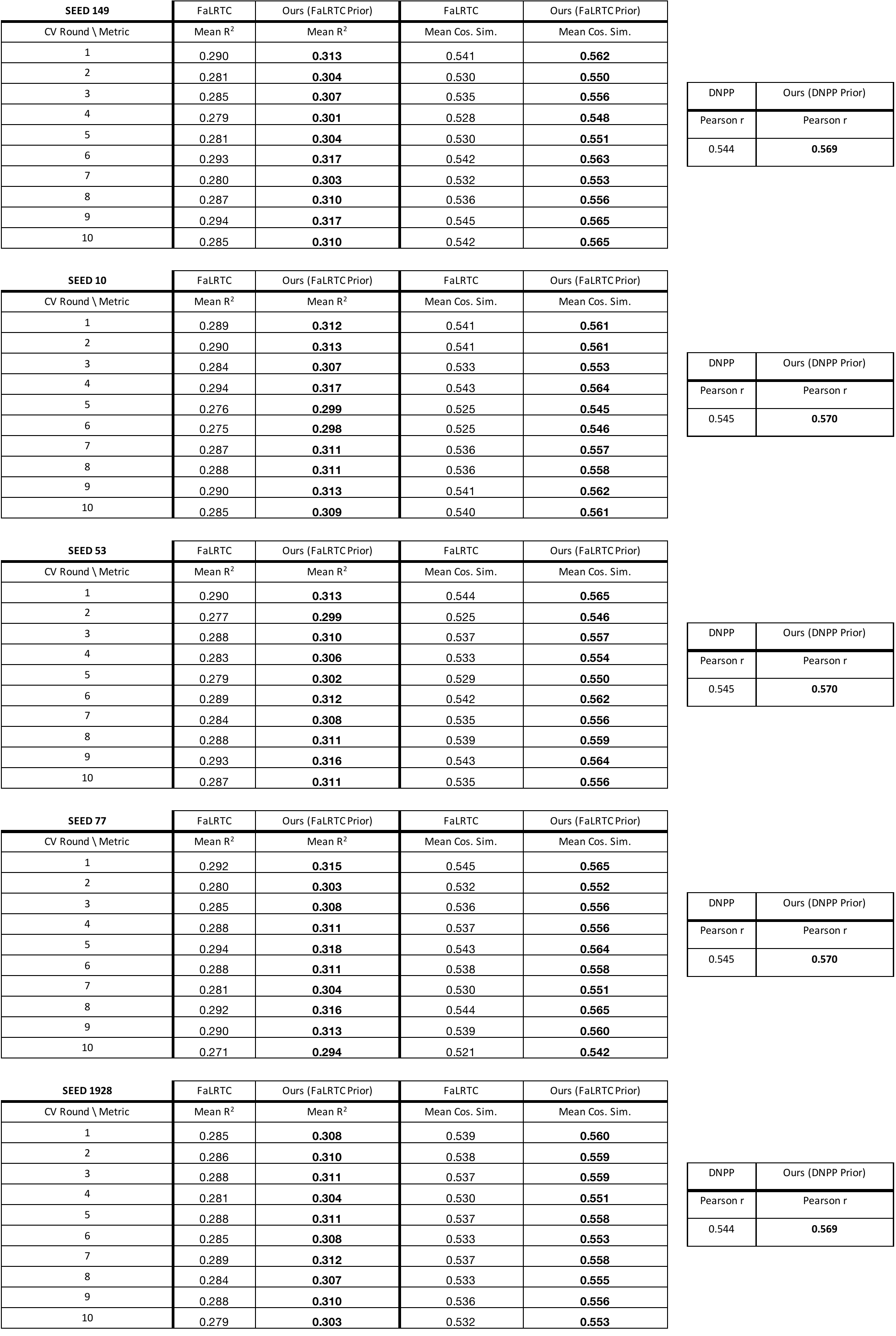}
    \caption{Comparison between FaLRTC and using the output of FaLRTC as a \prior. Using our method with the output of FaLRTC as a \prior leads to an improvement in all metrics across every round of 10-fold cross validation in 5 seeds.}
    \label{fig: SI FaLRTC vs. FaLRTC Prior}
\end{figure}

\begin{figure}
    \centering
    \includegraphics[width=.9\textwidth]{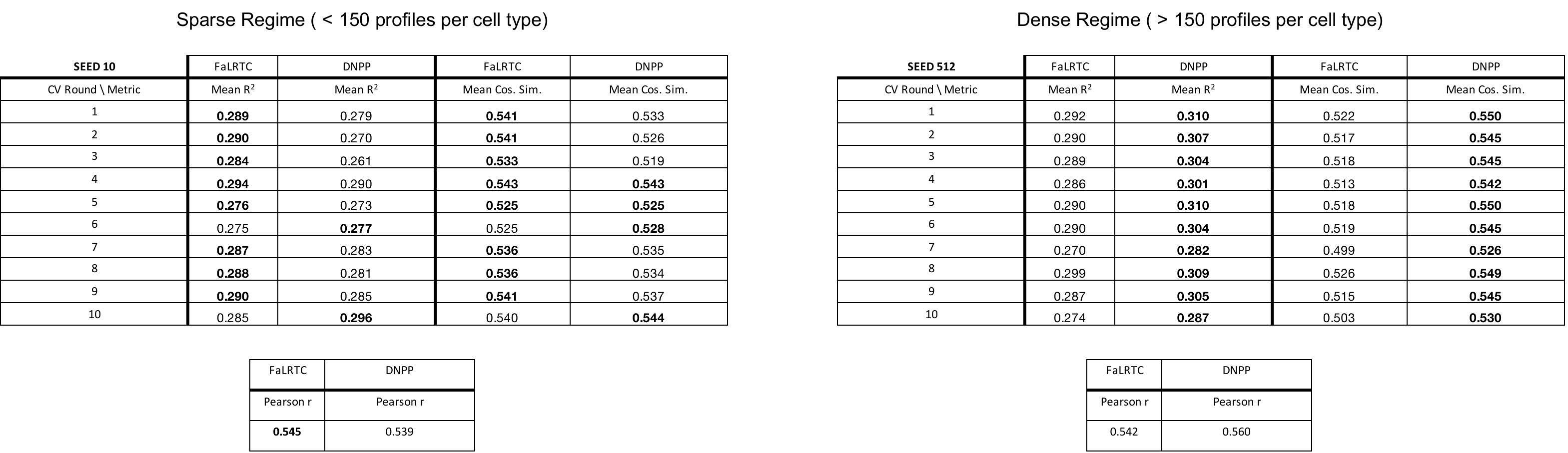}
    \caption{Comparison between DNPP and FaLRTC in (a) the sparse regime (< 150 profiles per cell type) and (b) the dense regime (> 150 profiles per cell type).  We observe that FaLRTC outperforms DNPP in almost every fold for all performance metrics in the sparse regime.  On the other hand DNPP outperforms FaLRTC in the dense regime in every fold for all performance metrics.  This result demonstrates that DNPP can be improved drastically in the sparse regime.}
    \label{fig: SI Dense vs. Sparse Regime}
\end{figure}

\begin{figure}
    \centering
    \includegraphics[width=\textwidth]{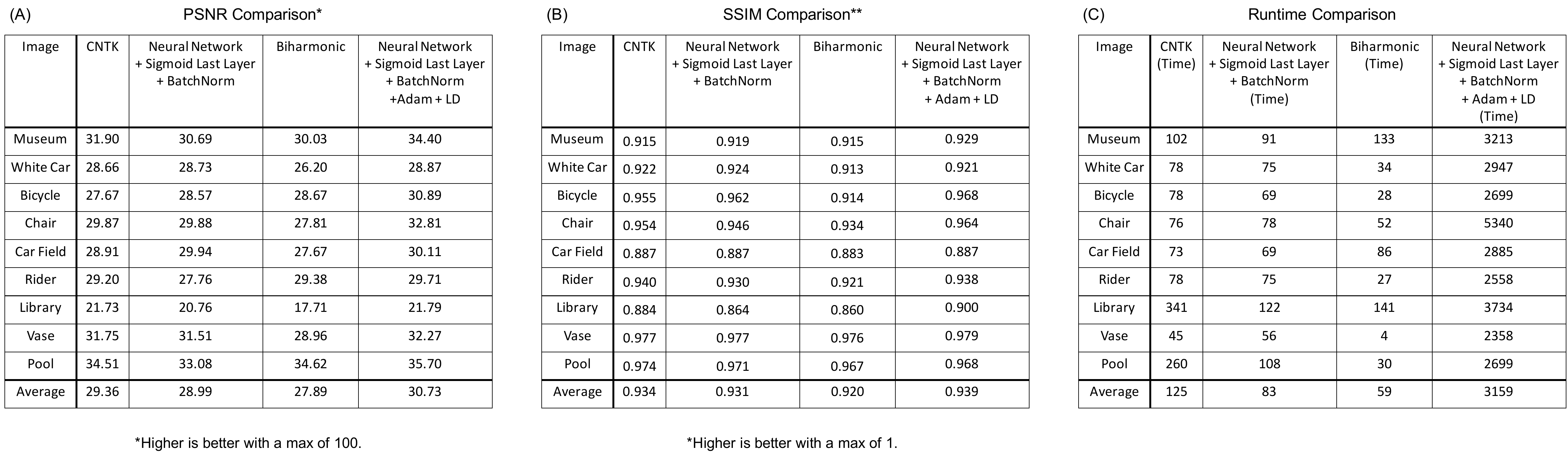}
    \caption{A comparison of PSNR, SSIM, and runtime for large hole image inpainting using our framework (CNTK), corresponding finite width neural networks, and biharmonic inpainting.  We observe that the CNTK outperforms (in PSNR and SSIM) on average both biharmonic inpainting and finite neural networks with sigmoid last layer and batch normalization layers while maintaing a runtime that is comparable to these methods.  The last column illustrates that using more advanced techniques such as Adam with Langevin dynamics \cite{BayesianDeepImagePrior} can be used to boost the performance of neural networks, but at additional computational cost.}
    \label{fig: SI Large Hole Inpainting Tables}
\end{figure}

\begin{figure}
    \centering
    \includegraphics[width=\textwidth]{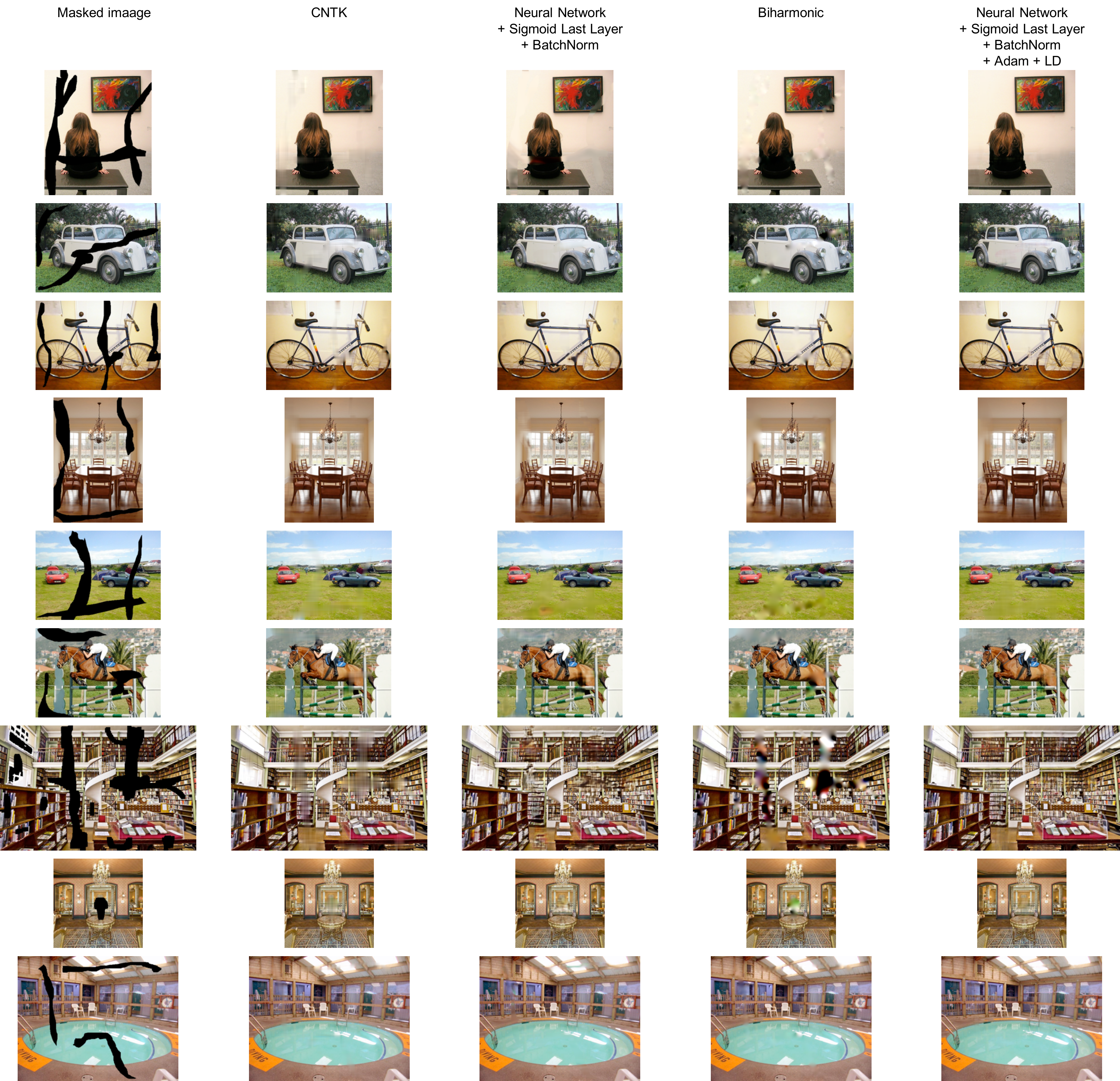}
    \caption{A qualitative comparison of large hole image inpainting using our framework (CNTK), corresponding finite width neural networks, and biharmonic inpainting. See Fig.~\ref{fig: SI Large Hole Inpainting Tables} for the corresponding quantitative comparison.}
    \label{fig: SI Large Hole Inpainting Images}
\end{figure}

\begin{figure}
    \centering
    \includegraphics[width=\textwidth]{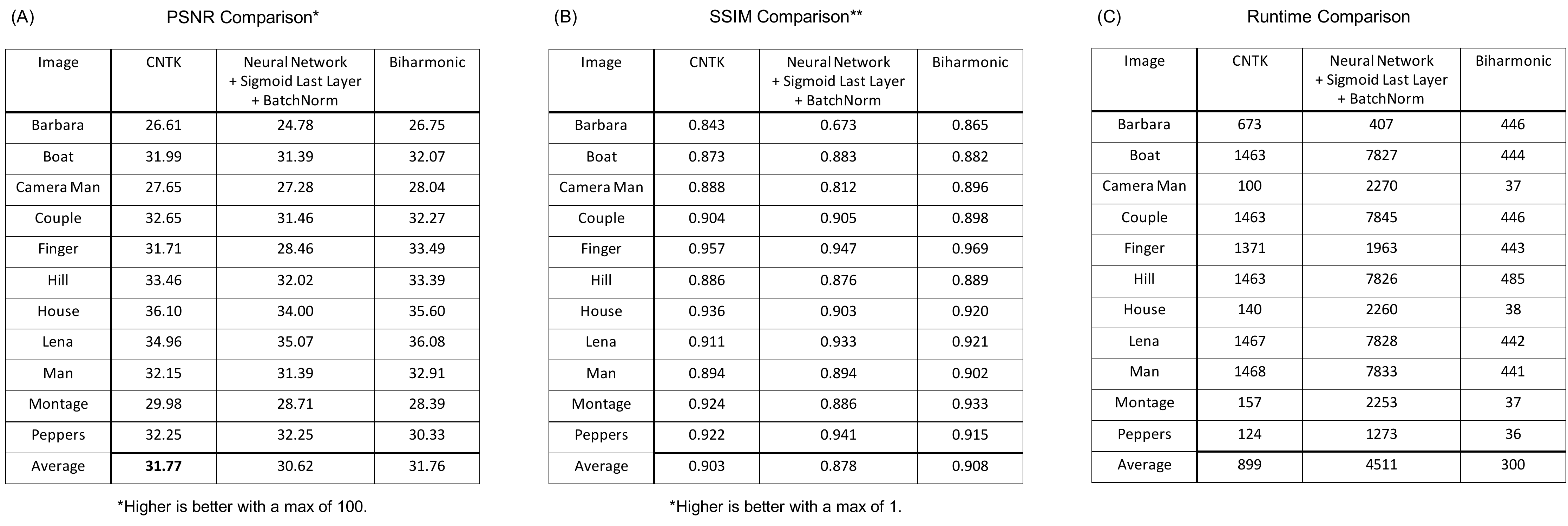}
    \caption{A comparison of PSNR, SSIM, and runtime for image reconstruction using our framework (CNTK), corresponding finite width neural networks, and biharmonic inpainting.  We observe that the CNTK performs (in PSNR and SSIM) on average comparably to biharmonic inpainting and outperforms corresponding finite neural networks with sigmoid last layer and batch normalization layers.  While our method is slower than biharmonic inpainting, it is more flexible than this method (see Fig.~\ref{fig: SI Large Hole Inpainting Tables}), and it is in average much faster than training finite width neural networks for this application.}
    \label{fig: SI Image Reconstruction Table}
\end{figure}

\begin{figure}
    \centering
    \includegraphics[width=.7\textwidth]{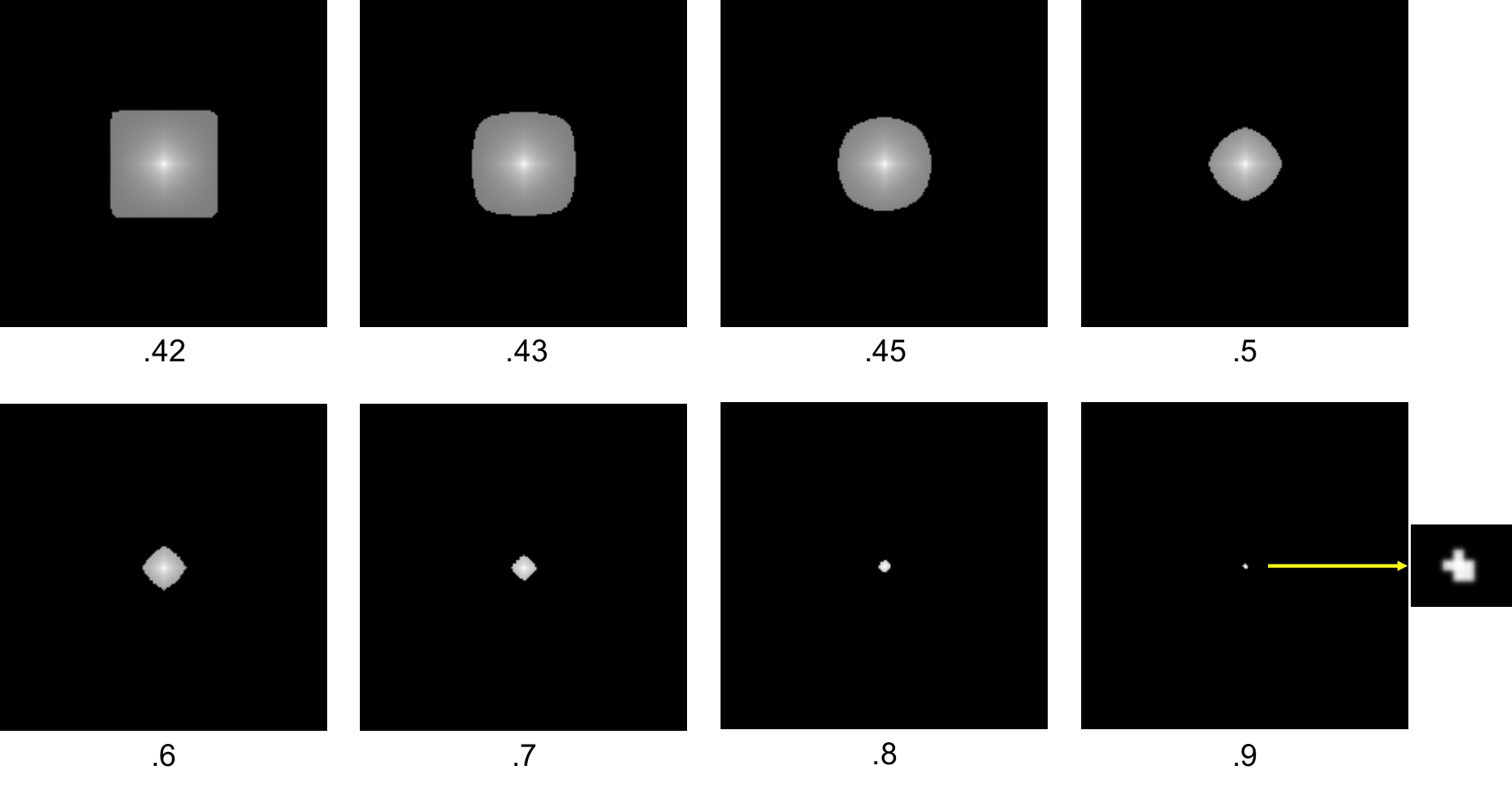}
    \caption{Visualizing the CNTK of a neural network with nearest neighbor downsampling and upsampling layers and a uniform random \prior illustrates that this kernel is akin to a kernel that uses different norms for image completion.  In the above figure, we visualize the CNTK heatmap for coordinate $(64, 64)$ of the CNTK for a neural network with 5 nearest neighbor downsampling and upsampling layers operating on $128 \times 128$ images.  In each subfigure, we zero out the $x$ percentile (provided below each image) of pixel values.  For example, the image on the bottom right corresponds to zeroing out all pixels with values below the $90$th percentile.  We observe that balls of varying norms appear in this visualization: e.g., the $\ell_{\infty}$ ball appears in the upper left and an $\ell_p$ ball with $1 < p < 2$ on the upper right.}
    \label{fig: SI Norms Kernels}`
\end{figure}

\end{document}